\newcommand{\defeq}{:=}
\newcommand{\makevector}[1]{{\mathbf #1}}
\newcommand{\wvec}{{\makevector{w}}}
\newcommand{\xvec}{{\makevector{x}}}
\newcommand{\zvec}{{\makevector{z}}}
\newcommand{\Ivec}{{\makevector{I}}}
\newcommand{\RR}{{\mathbb{R}}}
\newcommand{\EE}{{\mathrm{E}}}
\newcommand{\bbP}{{\mathbb{P}}}
\newcommand{\VV}{{\mathrm{V}}}
\newcommand{\calU}{{\mathcal{U}}}
\newcommand{\cS}{\mathcal{S}}
\newcommand{\cA}{\mathcal{A}}
\newcommand{\Bmat}{\mathbf{B}}
\newcommand{\Xmat}{\mathbf{X}}
\newcommand{\Ymat}{\mathbf{Y}}
\newcommand{\Smat}{\mathbf{\Sigma}}
\theoremstyle{definition} 
\newtheorem{corollary}{Corollary}
\newtheorem{lemma}{Lemma}
\newtheorem{theorem}{Theorem}
\newtheorem{definition}{Definition}
\newcommand{\inv}{{\raisebox{.2ex}{$\scriptscriptstyle-1$}}}
\newcommand{\vincent}[1]{\textcolor{blue}{[Vincent: #1]}}
\newcommand{\yash}[1]{\todo[color=blue!40]{Yash: #1}}
\newcommand{\stateSet}{\mathcal{S}}
\newcommand{\stateRV}{S}
\newcommand{\state}{s}
\newcommand{\sample}{D}
\begin{document}

%

%

\twocolumn[

\aistatstitle{Asymptotically Unbiased Off-Policy Policy Evaluation when Reusing Old Data in Nonstationary Environments}

\aistatsauthor{Vincent Liu \And Yash Chandak \And  Philip Thomas \And Martha White }

\aistatsaddress{ University of Alberta \And  Stanford University \And University of Massachusetts Amherst  \And University of Alberta } ]

\begin{abstract}
In this work, we consider the off-policy policy evaluation problem for contextual bandits and finite horizon reinforcement learning in the nonstationary setting. Reusing old data is critical for policy evaluation, but existing estimators that reuse old data introduce large bias such that we can not obtain a valid confidence interval. Inspired from a related field called survey sampling, we introduce a variant of the doubly robust (DR) estimator, called the regression-assisted DR estimator, that can incorporate the past data without introducing a large bias. The estimator unifies several existing off-policy policy evaluation methods and improves on them with the use of auxiliary information and a regression approach. We prove that the new estimator is asymptotically unbiased, and provide a consistent variance estimator to a construct a large sample confidence interval. Finally, we empirically show that the new estimator improves estimation for the current and future policy values, and provides a tight and valid interval estimation in several nonstationary recommendation environments. 
\end{abstract}

\section{INTRODUCTION}

Off-policy policy evaluation (OPE) is the problem of estimating the expected return of a target policy from a dataset collected by a different behavior policy. OPE has been used successfully for many real world systems, such as recommendation systems \citep{li2011unbiased} and digital marketing \citep{thomas2017predictive}, to select a good policy to be deployed in the real world. A variety of estimators have been proposed, particularly based on importance sampling (IS) \citep{hammersley1964sampling} and modifications to reduce variance, such as 
self-normalization \citep{swaminathan2015self}, direct methods that use reward models and variance reduction techniques like the doubly robust (DR) estimator \citep{dudik2011doubly,jiang2016doubly,thomas2016data}. Often high-confidence estimation is key, with the goal to estimate confidence intervals around these value estimates that maintain coverage without being too loose \citep{thomas2015hcpi,thomas2015high,swaminathan2015counterfactual,kuzborskij2021confident}.  

Much less work has been done, however, for the nonstationary setting where the reward and transition dynamics 
change over time. Extending these approaches to the nonstationary setting is key as most real world systems change with time, or appear to due to partial observability. In this setting, we face a critical bias-variance tradeoff: using past data introduces bias, but not using past data introduces variance. \citet{jagerman2019people} introduced the sliding-window IS and exponential-decay IS estimator, that gradually reduces the impact of older data to control the bias-variance tradeoff. 
There is some other work predicting future OPE values for a target policy in a nonstationary environment, by using time-series forecasting \citep{thomas2017predictive,chandak2020optimizing}; the goal there, however, is to forecast future policy values using past value estimates, rather than to estimate the current value.


Much of the other work tackling nonstationary problems has been for policy optimization. There is a relatively large body of work on nonstationary bandits in the on-policy setting (e.g., see \cite{yu2009piecewise}). More pertinent to this work is a recent approach in the off-policy setting \citep{hong2021non}. Their focus, however, is on the use of change point detection and hidden Markov models for policy optimization in the online phase. As a result, these ideas do not directly apply to nonstationary OPE.

In this work, we propose a new approach for nonstationary OPE by exploiting ideas from a related field called \emph{survey sampling} \citep{cochran1977sampling}, where handling nonstationary data has been a bigger focus.
We propose a variant of the DR estimator, called the regression-assisted DR estimator, for nonstationary environments.
We exploit two ideas:
(1) using auxiliary variables from the past data to build a proxy value and incorporate the proxy value in the estimator without introducing bias, and (2) a regression approach on top of the proxy value to reduce variance further. Using the regression approach introduces some bias, however, we prove that the estimator is asymptotically unbiased and provide a consistent variance estimator to construct a large sample confidence interval (CI). 
Moreover, we show that this regression-assisted estimator unifies several existing OPE methods, including the weighted IS estimator.
We empirically show that in several recommendation problems, formalized as contextual bandits, that the new estimator improves the estimation and provides a tighter and valid CI empirically compared to the sliding-window estimators. 
We then extend the idea to finite horizon reinforcement learning, and highlight similar improvements.

\section{PROBLEM SETUP}
In this section, we describe our main problem setting: off-policy policy evaluation (OPE) in the nonstationary setting. 
To convey the core insights of our paper precisely, we first focus on contextual bandits. 

\textbf{Notation.} We start by describing the standard stationary setting for OPE in the contextual bandit setting.
Let $\stateSet$ be a set of contexts, $\cA$ be a set of actions, and 
$r:\stateSet\times\cA\to\RR$ be the reward function. The goal is to evaluate the value of a target policy $\pi$, that is, estimate 
$J(\pi)=\EE_{\stateRV\sim P, A\sim\pi(\cdot|S)}[r(\stateRV,A)]$, using an offline (off-policy) dataset. The dataset is created through the interaction of a behavior policy with the environment: (1) the environment draws a context $\state_i$ from $P\in\Delta(\stateSet)$ and (2) the behavior policy draws an action $a_i$ from $\pi_b(\cdot|\state_i)$ and observes $r_i=r(\state_i,a_i)$.
This process repeats $n$ times, giving dataset $\sample=\{(\state_i,\xvec_i,a_i,r_i)\}_{i=1}^n$. 
We assume that we also observe the context feature $\xvec_{s}\in\RR^d$ for each context $s$ in the dataset. 

\textbf{Nonstationary OPE.} Dealing with arbitrary nonstationarity may not be possible. Fortunately, many real world environments have structures than can be exploited. 
We consider a piecewise stationary setting with known change points, where the reward function changes across intervals but remains stationary within each interval. For example, an environment can be stationary within each day or each week or for a number of interactions.
We assume the set of contexts and the set of actions do not change over time. 

Let $r_k$ denote the reward function for the $k$-th interval and $D_k=\{(\state_i,a_i,r_k(\state_i,a_i))\}_{i=1}^{n_k}$ denote the data of size $n_k$ collected over the $k$-th interval. The goal is to estimate 
\begin{equation*}
    J_{k}(\pi)=\sum_{s\in\stateSet,a\in\cA}P(\state)\pi(a|\state)r_k(\state,a)
\end{equation*}
given previous datasets $D_1,\dots,D_{k-1}$ and a newly sampled $D_k$. The problem mirrors the real world where we have plenty of past data $D_1,\dots,D_{k-1}$ but only a small amount of new data $D_k$ to estimate the current value $J_{k}(\pi)$. 
We consider a stationary context distribution to present the paper succinctly, however, our methods described in the paper are applicable to the settings where the context distribution is also changing.

It is often necessary in high-stakes applications to provide confidence intervals.
Let $\mathcal D = (D_t)_{t=1}^k$ denote the set of all data collected across different intervals.
Given $\mathcal D$ and a desired level of failure probability $\alpha \in (0, 1)$, it would be ideal to estimate a high confidence lower bound $\operatorname{CI}^{-}$ and a high confidence upper bound $\operatorname{CI}^{+}$ such that
\begin{align*}
    \Pr\left(\operatorname{CI}^{-}(\mathcal D, \alpha)\leq J_k(\pi) \leq  \operatorname{CI}^{+}(\mathcal D, \alpha) \right) = 1-\alpha
\end{align*}
where the probability is under the randomness of $D_k$ and conditional on all old data $D_1,\dots,D_{k-1}$.

\section{BACKGROUND}
In this section, we review existing estimators for stationary OPE and describe how OPE can be written using the survey sampling formulation. We use this survey sampling formulation to introduce the proposed estimators in the next section. 

\subsection{Estimators for Stationary OPE}
A foundational strategy to estimate $J(\pi)$ in stationary OPE is to use importance sampling.
The IS estimator is given by
\begin{equation*}
    \hat J_\text{IS}(\pi) 
    = \frac{1}{n} \sum_{i=1}^n \frac{\pi(a_i|\state_i)}{\pi_b(a_i|\state_i)} r(\state_i,a_i).
\end{equation*}
%
This IS estimator can have high variance since the importance ratio can be very large. The weighted IS (WIS) estimator \citep{sutton1998introduction}, also known as the self-normalized estimator \citep{swaminathan2015self}, normalizes the importance weights and is more commonly used. The WIS estimator is given by
\begin{equation*}
    \hat J_\text{WIS}(\pi) = \sum_{i=1}^n \frac{\pi(a_i|\state_i)/\pi_b(a_i|\state_i)}{\sum_{j=1}^n \pi(a_j|s_j)/\pi_b(a_j|s_j)} r(\state_i,a_i).
\end{equation*}
Besides these IS-based estimators, another common estimator is the direct method (DM). We learn a reward prediction model $\hat r$ and use
\begin{equation*}
    \hat J_\text{DM}(\pi) = \frac{1}{n} \sum_{i=1}^n \sum_{a\in\cA} \pi(a|\state_i) \hat r(\state_i, a).
\end{equation*}
The doubly robust (DR) estimator \citep{dudik2011doubly} combines the DR and the IS estimator, 
\begin{align*}
    &\hat J_\text{DR}(\pi) = \frac{1}{n} \sum_{i=1}^n\\ &\left[\frac{\pi(a_i|\state_i)}{\pi_b(a_i|\state_i)} (r(\state_i,a_i) - \hat r(\state_i, a_i)) +
    \sum_{a\in\cA} \pi(a|\state_i) \hat r(\state_i, a)\right].
\end{align*}

There are other OPE estimators such as  estimators with clipping \citep{bottou2013counterfactual} or shrinkage \citep{su2020doubly}. \citet{dudik2012sample} studied the setting where the policies are non-stationary (history-dependent) but the environment is stationary, which is different from our setting. \citet{chandak2021universal} focus on estimating the reward distribution and do not discuss how to efficiently leverage past data under non-stationarity.

\subsection{OPE as Survey Sampling}


Survey sampling can be dated back to \cite{hansen1943theory,horvitz1952generalization}, where they consider the problem of selecting a sample of units from a finite population to estimate unknown population parameters.
Formally, let $\calU=\{1,\dots,N\}$ be the population of interest, $y_i$ be the study variable and $\xvec_i$ be the auxiliary variable for the unit $i\in \calU$. A subset of the population, called a sample, is selected according to a sampling design. We observe the study variable for units in the sample, and the goal is to estimate the population total of the study variables $t_y=\sum_{i\in\calU} y_i$. 

To formalize OPE under survey sampling, let the population be $\calU=\stateSet\times\cA$ and the study variable be $y_{\state,a}=P(\state) \pi(a|\state) r(\state,a)$. The population total of $y$ is the value of the policy: $t_y =\sum_{(\state,a)\in\calU} y_{\state,a} = J(\pi)$. The weighting $P(\state)\pi(a|\state)$ goes into the study variable since the goal is to estimate the total of study variable without weighting. Even though we have $P(\state)$ in the study variable, the term often cancels out in the estimator.

This formulation has some subtle differences from the standard OPE formulation. First, it assumes that $\stateSet\times\cA$ is finite, since $\calU$ is finite in survey sampling. Second, the study variable is fixed, that is, the reward function is deterministic.
These limitations can be overcome by assuming that the finite population is generated as a random sample from an infinite superpopulation; this superpopulation model is discussed in the appendix. For the main body, we assume a finite population with fixed study variables.  

Of particular interest for nonstationarity is the \emph{model-assisted} approach for survey sampling \citep{sarndal2003model}. 
The key idea is to use the auxiliary variable $x_{s,a}$ to form a proxy value $\hat y_{s,a}$ such that $\hat y_{s,a}$ is close to the study variable $y_{s,a}$. A simple example is that the auxiliary variable $x_{\state,a}$ might be the value of $y_{\state,a}$ at a past time and we can use proxy value $\hat y_{s,a}=x_{\state,a}$. A general form for a model-assisted estimator, assuming the population total of the proxy value is known, is the difference estimator \citep{cassel1976some}: $\sum_{(\state,a)\in\calU} \hat y_{\state,a} + \sum_{(\state,a) \in\sample} \frac{y_{\state,a}-\hat y_{\state,a}}{np_{\state,a}}$ where $p_{\state,a}$ is the probability of selecting the pair $(s,a)$. This estimator is unbiased 
and can be much lower variance, if the proxy value is close to the study variable. This strategy is like adding a control variate, but specific to survey sampling since the source of stochasticity is different than the typical Monte Carlo setting.   




\section{OPE ESTIMATORS UNDER NONSTATIONARITY}
\label{sec:NSOPE}


In the section, we propose an estimator for nonstationary environments. 
There are two popular strategies that consider the bias-variance tradeoff when reusing the past data in non-stationary environments: sliding window IS and exponential decay IS \citep{jagerman2019people}.
The sliding window IS estimator directly uses the IS estimator for the data in the most recent $B$ intervals.  Though not proposed in the original work, it is natural to extend this idea to other estimators. For example, for the direct method, we can build a reward model from the data in the most recent $B$ intervals and evaluate the policy with the reward prediction.  

The window size $B$ controls the bias-variance tradeoff. If $B=0$ then we only use the most recent data $D_k$: the estimator does not introduce bias by using past data but suffers high variance due to having a small sample size. If we use a large $B$, the estimator might introduce large bias but might have lower variance due to a larger sample size. 
Sliding window estimators require carefully choosing $B$ to balance the bias from using past data and the variance from not using past data. The balance usually depend on how fast the environment is changing, which is usually unknown.
Moreover, even with a small value of $B$, the bias of the sliding window estimator can be so large that the confidence interval is invalid, as we will show in the experiment section. 

Therefore, the main question that we aim to address is:

\emph{How can we reuse the past data for nonstationary OPE without introducing large bias?}

One natural way to leverage the past data would be to use the DR estimator with a reward prediction learned from the past data, as described in the following section. 
However, naively using the past data to construct a reward prediction may not be the best approach in the nonstationary setting.  
This raises a followup question:
\emph{How can we obtain a good reward prediction to both reduce the error of estimation and also obtain tight CIs?}
To address this challenge we draw inspiration from the survey sampling literature, and propose the regression-assisted DR estimator, that helps reduce variance further and provides tighter CIs.

\subsection{The Difference and DR estimator}
\label{sec:diff}

We can leverage the idea of the difference estimator in survey sampling, for our nonstationary OPE setting. We can use the past data $D_{k-B},\dots,D_{k-1}$ to build a reward prediction $\hat r_{k}$ as a function of the context feature and the action: $\hat r_{k}(s,a)=m(\xvec_{s},a;\theta)$ for some function $m$ parameterized by $\theta$. The reward prediction can be used as the proxy value in the estimator.
The resulting difference estimator, for interval $k$, is
\begin{align}
    \hat t_{\text{Diff},k} 
    &= \sum_{(\state,a)\in\calU} P(\state) \pi(a|\state)  \hat r_{k}(\state,a) \nonumber\\
    &+ \frac{1}{n} \sum_{(\state,a) \in D_k} \frac{\pi(a|\state) }{\pi_b(a|s)}(r(\state,a) - \hat r_{k}(\state,a)).
    \label{eq:diff}
\end{align}
The elegance of this approach is that we can leverage past data by incorporating it into the proxy value in the difference estimator, without introducing any bias. 

While the estimator is unbiased, the variance
depends on the quality of the reward prediction $\hat r_{k}$ \citep{thomas2016data}. 
The environment is nonstationary, so past data has to be used carefully to get a good estimate, and in some cases, the estimate may be poor.
In the next section, we discuss how to obtain a better prediction by fitting a regression on top of the reward prediction. 

A careful reader would have noticed one other nuance with the above difference estimator: it requires the population total of the proxy value $\hat y_{s,a}=P(\state) \pi(a|\state)  \hat r_{k}(\state,a)$, which is the first term in Eq \eqref{eq:diff}. In some cases, this information may be known and it should be leveraged to get a better estimator for OPE. In other cases, we will need to estimate it.
In the standard contextual bandit setting, given a sample $\sample_k$, we often assume that we know the auxiliary variable $\xvec_{s,a}$ for all units in the set $\{(\state,a):\state\in\sample_k,a\in\cA\}$. 
If we estimate the population total from $\sample_k$ with the information about the auxiliary variables, the estimator becomes
\begin{align}
    \hat t_{\text{DR},k} 
    &= \frac{1}{n}\sum_{\state\in \sample_k} \sum_{a\in\cA} \pi(a|\state) \hat r_{k}(\state,a) \nonumber\\
    &+ \frac{1}{n} \sum_{(\state,a) \in \sample_k} \frac{\pi(a|\state)}{\pi_b(a|s)} (r_{k}(\state,a) - \hat r_{k}(\state,a)).
    \label{eq:DR}
\end{align}
This estimator reduces to the DR estimator. Therefore, the DR estimator is the difference estimator when the population total of the proxy value is estimated by sample $\sample_k$. 

However, there are other options to estimate the population total, that do not result in the standard DR estimator. Of particular relevance here is that we can use past data $D'$ to estimate this population total: $\frac{1}{|\sample'|}\sum_{\state\in\sample'} \sum_{a\in\cA} \pi(a|\state)  \hat r_{k}(\state,a)$.
This term does not rely on rewards in the past data---which might not be correct due to nonstationarity---and only requires access to the auxiliary variables $\xvec_s$ in these datasets. If we assume only the rewards are nonstationary, rather than the context distribution, making these old datasets a perfectly viable option to estimate this population total. In survey sampling, this is usually motivated by assuming that there might be another survey that contains the auxiliary variables \citep{yang2020statistical}.

\subsection{The Regression-Assisted DR Estimator} \label{sec:reg}

We consider a model on top of the reward prediction from the past data to mitigate variance further.
Let $\phi_k(\state,a)^\top=(1, \hat r_{k}(\state,a))$ be the augmented feature vector with the reward prediction and define $\zvec_{\state,a}= P(\state)\pi(a|\state)\phi_k(\state,a)$. Note that $\zvec_{\state,a}$ is a function of the auxiliary variable $\xvec_{\state}$. 
We consider a (heteroscedastic) linear regression model such that the study variables $y_{s,a} \defeq P(s)\pi(a|s)r_k(s,a)$ are realized values of the random variables $Y_{s,a}$ with $\EE_\xi[Y_{s,a}] = \zvec_{s,a}^\top \beta$ and $\VV_\xi(Y_{s,a})=\sigma_{s,a}^2=P(\state)\pi(a|\state)\sigma^2$
where the expectation and variance are with respect to the model $\xi$, and $\beta,\sigma$ are the model coefficients. 
These are the assumptions underlying the regression estimator, rather than assumptions about the real world. Further, even though we consider a linear regression on the feature vector $\phi$ for the regression-assisted DR estimator, the reward prediction itself can be non-linear (e.g., a neural network).



The weighted least squares estimate of $\beta$ is $\beta_{k} = \arg\min_\beta \sum_{(\state,a)\in\calU} \frac{1}{\sigma_{\state,a}^2} \left(\zvec_{s,a}^\top\beta- y_{s,a}\right)^2$. 
Suppose the relevant matrix is invertible, $\beta_{k}$ can be estimated using sample data $\sample_k$:
\begin{align}
    \hat \beta_k
    =& \left(\sum_{(\state,a)\in D_k} \frac{\pi(a|\state)}{\pi_b(a|s)}\phi_k(\state,a) \phi_k(\state,a)^\top\right)^\inv \nonumber\\ &\left(\sum_{(\state,a)\in D_k} \frac{\pi(a|\state)}{\pi_b(a|s)}\phi_k(\state,a)r_k(\state,a)\right).
    \label{eq:betahat}
\end{align}
If we know the population total of $\zvec_{\state,a}^\top \hat\beta_k$, then the regression-assisted DR (Reg) estimator is
\begin{align}
    &\hat t_{\text{Reg},k}
    = \sum_{(\state,a)\in\calU} P(\state) \pi(a|\state) \phi_k(\state,a)^\top \hat\beta_k \nonumber\\ 
    &\ \ \ \ + \frac{1}{n} \sum_{(\state,a)\in D_k} \frac{ \pi(a|\state)}{\pi_b(a|s)} \left(r_k(\state,a)- \phi_k(\state,a)^\top\hat\beta_k\right).
    \label{eq:reg}
\end{align}
More generally, we can use the same data or the past data to estimate the population total, as described above.



This $\hat \beta_k$ consists only of the weight on the past reward model and the bias unit. This may not seem like a particularly useful addition, but because it is estimated using $D_k$, it allows us to correct the past reward prediction. 

Further, the regression-assisted DR estimator actually provides a natural way to combine existing estimators in the OPE literature, depending on the choice of the feature vector $\phi$ and the coefficients $\beta$. 
To see this, we first show how WIS can be seen as an instance of this estimator.\footnote{\cite{mahmood2014weighted} have a similar observation that the solution $\hat\beta$ is the WIS estimator if $\phi(\state,a)=1$ for all $(s,a)$. They also extend the estimate with linear features $\phi$ and use $\phi(s,a)\hat\beta$ directly, which is more related to the model-based approach. In our work, the model prediction is used as the proxy value so the resulting estimators are different.}

We provide the theoretical result in the stationary setting where $\phi$ is fixed, so we can drop the subscript $k$ for simplicity. 
For the nonstationary setting, the inference for $\hat t_{Reg,k}$ is conducted \emph{conditional on} the past data  $D_1,\dots,D_{k-1}$, so $\phi$ is again fixed and all results extends to the nonstationary setting. The proofs can be found in Appendix \ref{appendix:proof}.

\begin{theorem}[WIS as a special case of the regression-assisted estimator]
    \label{thm1}
    Suppose we use a linear regression model with univariate feature $\phi(s,a)=1$.
    Then the regression-assisted DR estimator with estimated coefficient $\hat\beta$ from Eq \eqref{eq:betahat} has the same form as the WIS estimator:
    \begin{align}
        \hat t_\text{Reg} 
        = \sum_{(\state,a)\in\sample}\frac{\pi(a|\state)/\pi_b(a|s)}{\sum_{(s',a')\in S}\pi(a'|s')/\pi_b(a'|s')}r(s,a).
        \label{eq:WIS}
\end{align}
\end{theorem}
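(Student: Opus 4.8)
The plan is to prove the identity by direct substitution: specialize the weighted-least-squares coefficient $\hat\beta$ to the univariate feature $\phi(s,a)=1$, plug the result into the regression-assisted DR estimator of Eq.~\eqref{eq:reg}, and show that two simplifications collapse it to the self-normalized form of Eq.~\eqref{eq:WIS}. No limiting or probabilistic argument is needed; everything is an exact algebraic identity for a fixed sample.

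First I would evaluate $\hat\beta$ from Eq.~\eqref{eq:betahat} under $\phi(s,a)=1$. Here $\phi(s,a)\phi(s,a)^\top$ and $\phi(s,a)r(s,a)$ reduce to scalars, so the ``design matrix'' is the scalar total of importance weights $W\defeq\sum_{(s,a)\in\sample}\pi(a|s)/\pi_b(a|s)$ and the right-hand vector is $\sum_{(s,a)\in\sample}(\pi(a|s)/\pi_b(a|s))\,r(s,a)$. Inverting gives $\hat\beta=\big(\sum_{(s,a)\in\sample}\tfrac{\pi(a|s)}{\pi_b(a|s)}r(s,a)\big)/W$, which is already the self-normalized ratio on the right of Eq.~\eqref{eq:WIS}. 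It remains to check that the two other pieces of the regression-assisted estimator leave this ratio unchanged.

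The second step handles the population-total (proxy) term $\sum_{(s,a)\in\calU}P(s)\pi(a|s)\phi(s,a)^\top\hat\beta$. With $\phi\equiv 1$ this is $\hat\beta\sum_{(s,a)\in\calU}P(s)\pi(a|s)$, and since $\sum_{s}P(s)\sum_{a}\pi(a|s)=1$, the whole term equals $\hat\beta$. The third and crucial step is to show that the importance-weighted correction term $\tfrac1n\sum_{(s,a)\in\sample}\tfrac{\pi(a|s)}{\pi_b(a|s)}\big(r(s,a)-\phi(s,a)^\top\hat\beta\big)$ vanishes. This is precisely the normal equation satisfied by $\hat\beta$: left-multiplying Eq.~\eqref{eq:betahat} by its design matrix and rearranging yields $\sum_{(s,a)\in\sample}\tfrac{\pi(a|s)}{\pi_b(a|s)}\phi(s,a)\big(r(s,a)-\phi(s,a)^\top\hat\beta\big)=0$, and when $\phi\equiv 1$ this is exactly (up to the $1/n$ factor) the correction term. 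Hence the correction contributes zero, giving $\hat t_{\text{Reg}}=\hat\beta$, which is Eq.~\eqref{eq:WIS}.

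I do not expect a genuine obstacle; the only point requiring care is recognizing that the correction term vanishes as an instance of the weighted-least-squares orthogonality (normal) equation rather than by coincidence. This is in fact the structural reason the argument works and foreshadows the general pattern: whenever the feature vector $\phi$ contains a constant column—as the augmented feature $(1,\hat r_k)$ always does—the corresponding normal equation forces the $\pi/\pi_b$-weighted residuals to sum to zero, so the DR correction term vanishes identically, and a constant column alone makes the estimator self-normalizing.
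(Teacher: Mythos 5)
Your proof is correct, but it takes a more elementary route than the paper. The paper proceeds in two stages: it first identifies WIS with the classical \emph{ratio estimator} $t_z\,\hat t_\text{HH}/\hat t_z$ for the auxiliary variable $z_{\state,a}=P(\state)\pi(a|\state)$, and then proves a more general lemma (via the $g$-weight representation $\hat t_\text{Reg}=\sum_{\sample} g_{\state,a}y_{\state,a}/(np_{\state,a})$) showing that under the ratio model $\EE_\xi[Y_{\state,a}]=\beta z_{\state,a}$, $\VV_\xi(Y_{\state,a})=z_{\state,a}\sigma^2$, the $g$-weights are constant and the regression estimator collapses to the ratio estimator. You instead compute $\hat\beta$ explicitly for $\phi\equiv 1$, observe that the population-total term equals $\hat\beta$ because $\sum_{(\state,a)\in\calU}P(\state)\pi(a|\state)=1$, and kill the correction term by the weighted normal equation $\sum_{\sample}\tfrac{\pi(a|\state)}{\pi_b(a|\state)}\phi(\state,a)\bigl(r(\state,a)-\phi(\state,a)^\top\hat\beta\bigr)=0$. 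The paper's route buys generality (it identifies the whole family of ratio estimators, for any univariate $z$ with the proportional variance model, as regression estimators), while yours is shorter and makes explicit a structural fact the paper leaves implicit: whenever $\phi$ contains a constant column, the corresponding normal equation forces the importance-weighted residuals to sum to zero, so the DR correction term vanishes identically and the estimator is automatically self-normalizing --- which indeed applies to the augmented feature $(1,\hat r_k)$ used throughout the paper. Both arguments are exact algebraic identities and both are valid.
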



The result provides a novel perspective for the WIS estimator: it can be viewed as fitting a regression to predict the reward with a constant feature. As a result, the only difference between the regression-assisted DR estimator and the WIS estimator is the choice of feature vector for reward prediction. If there are other features that might be useful for predicting the reward, we can include it with the regression approach and potentially improve the WIS estimator.


In Table \ref{tab1}, we show that we can recover other estimators based on different choices for the coefficients $\beta=(\beta_1,\beta_2)^\top$ with the feature vector $\phi(\state,a)^\top=(1, \hat r(\state,a))$. If $\beta_1=0,\beta_2=0$, we recover the IS estimator. If $\beta_1=0,\beta_2=1$, we recover the difference estimator or the DR estimator. If $\beta_2=0$ and $\beta_1$ is learned from data, we recover the WIS estimator. 
\begin{table}[h]
    \centering
    \caption{A Unifying View of Existing Estimators.}
    \label{tab1}
    \begin{tabular}{c|cccc}
        & IS & DR & WIS & Reg  \\ \hline
        $\beta_1$ & $0$ & $0$ & $\hat \beta_1$ & $\hat \beta_1$ \\
        $\beta_2$ & $0$ & $1$ & 0 & $\hat \beta_2$\\
    \end{tabular}
\end{table}

There are other approaches to estimate the coefficients from data. 
The more robust DR estimator \citep{farajtabar2018more} minimizes the estimated variance $\hat \VV (\hat t_\text{Reg})$ with respect to the coefficient to achieve the lowest asymptotic variance among all coefficient $\beta$ under some mild conditions. \cite{kallus2019intrinsically} further consider an expanded model class on top of the reward prediction and minimize the estimated variance among both the expanded model class and the reward prediction model class. 
However, it often unclear how large the sample size needs to be such that the estimator with the lowest asymptotic variance indeed has a lower variance against other estimators in practice. 
On the other hand, there is a considerable literature in survey sampling on improving estimation for the total and variance estimator when the sample size is small or the feature vector is high dimensional. For example, \cite{breidt2000local,mcconville2017model}
propose different models as an alternative to the linear regression model. These regression models can be potentially more useful for feature selection or to find a model that fits the population well.





\section{THEORETICAL ANALYSIS}
\newcommand{\MSE}{\mathrm{MSE}}
\newcommand{\AMSE}{\mathrm{AMSE}}
\newcommand{\Cov}{\mathrm{Cov}}
\newcommand{\AV}{\mathrm{AV}}
\newcommand{\V}{\mathrm{V}}

In the regression approach, if the coefficients are estimated from the same data $D_k$, the estimator becomes biased. For example, the DR estimator is unbiased since the coefficients are fixed, and the WIS estimator is biased since one of the coefficients is estimated.
In this section, we show that even if we run the regression on the same data we use to build the estimator, the regression-assisted DR estimator still enjoys asymptotic properties.


To prove these theoretical properties, there are a number of results from the survey sampling literature that we build on. For completeness, we provide a brief overview of survey sampling in Appendix \ref{sec:ss}, and the proof of these properties under survey sampling notation in Appendix \ref{appendix:proof}. 

\begin{theorem}[Properties of the estimator]
    \label{thm2}
    Let $\AV(\cdot)$ denote the asymptotic variance in term of the first order, that is $\V(\cdot)=\AV(\cdot)+o(n^\inv)$, we have
    (1) $\hat t_\text{Reg}$ is asymptotically unbiased with a bias of order $O(n^{-1})$, and (2) 
    \begin{align*}
        &\AV(\hat t_\text{Reg}) 
        = \frac{1}{n}\\
        &\left(\sum_{(\state,a)\in U} P(\state) \frac{\pi(a|\state)^2}{\pi_b(a|s)}(r(\state,a) - \phi(\state,a)^\top \beta)^2 - t_{e}^2\right)
    \end{align*}
    where $t_e = \sum_\calU P(s)\pi(a|s)(r(\state,a) - \phi(\state,a)^\top \beta)$.
\end{theorem}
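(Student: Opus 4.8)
The plan is to recognize $\hat t_{\text{Reg}}$ as a generalized regression (GREG) estimator from survey sampling and follow the standard linearization argument for its asymptotics, specialized to the fact that our design draws the $n$ pairs $(s_i,a_i)$ i.i.d.\ with selection probability $p_{s,a}=P(s)\pi_b(a|s)$, so every design-based expectation reduces to an ordinary i.i.d.\ average. First I would rewrite the estimator in GREG form. Writing $\zvec_{s,a}=P(s)\pi(a|s)\phi(s,a)$, $y_{s,a}=P(s)\pi(a|s)r(s,a)$, and the Hansen--Hurwitz estimator $\hat t_{HH}(g)=\frac{1}{n}\sum_{(s,a)\in D} g_{s,a}/p_{s,a}$ of any total $\sum_\calU g_{s,a}$, a direct manipulation of Eq.~\eqref{eq:reg} gives $\hat t_{\text{Reg}} = \hat t_{HH}(y) + (t_z - \hat t_{HH}(z))^\top\hat\beta$, where $t_z=\sum_\calU \zvec_{s,a}$. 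I would also identify the population target $\beta$ of $\hat\beta$: since the $1/n$ factors cancel in Eq.~\eqref{eq:betahat} and the homoscedastic weights $\sigma_{s,a}^2=P(s)\pi(a|s)\sigma^2$ cancel from the population normal equations, $\hat\beta$ is a ratio of Hansen--Hurwitz sample moments converging by the law of large numbers to the population weighted least-squares coefficient $\beta = (\sum_\calU P(s)\pi(a|s)\phi\phi^\top)^{-1}(\sum_\calU P(s)\pi(a|s)\phi\, r)$, with $\hat\beta - \beta = O_p(n^{-1/2})$ by the central limit theorem for i.i.d.\ averages.

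Then comes the linearization. Substituting $\hat\beta=\beta+(\hat\beta-\beta)$ splits the estimator into a linear part $\hat t_{\text{lin}}=\hat t_{HH}(y)+(t_z-\hat t_{HH}(z))^\top\beta$ and a remainder $R=(t_z-\hat t_{HH}(z))^\top(\hat\beta-\beta)$. The linear part collapses into a single Hansen--Hurwitz estimator of the residual total: with $e_{s,a}=y_{s,a}-\zvec_{s,a}^\top\beta=P(s)\pi(a|s)(r(s,a)-\phi(s,a)^\top\beta)$ we get $\hat t_{\text{lin}}=t_z^\top\beta+\frac{1}{n}\sum_{(s,a)\in D} e_{s,a}/p_{s,a}$, an exact average of i.i.d.\ terms. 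Its mean is $t_z^\top\beta+\sum_\calU e_{s,a}=t_y=J(\pi)$, so $\hat t_{\text{lin}}$ is exactly unbiased, and its variance is $\frac{1}{n}$ times the per-draw variance $\sum_\calU e_{s,a}^2/p_{s,a}-t_e^2$. Substituting $e_{s,a}$ and $p_{s,a}$ gives $e_{s,a}^2/p_{s,a}=\frac{P(s)\pi(a|s)^2}{\pi_b(a|s)}(r(s,a)-\phi(s,a)^\top\beta)^2$, which reproduces exactly the claimed expression for $\AV(\hat t_{\text{Reg}})$. Part~(2) then follows once $R$ is shown to affect the variance only at order $o(n^{-1})$, and part~(1) follows from showing $\EE[R]=O(n^{-1})$.

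The main obstacle is controlling the remainder $R$. Since $t_z-\hat t_{HH}(z)$ is a mean-zero Hansen--Hurwitz error of order $O_p(n^{-1/2})$ and $\hat\beta-\beta=O_p(n^{-1/2})$, their product is $O_p(n^{-1})$, which already yields first-order asymptotic equivalence of $\hat t_{\text{Reg}}$ and $\hat t_{\text{lin}}$ and hence the variance claim. The more delicate point is the $O(n^{-1})$ bias bound in part~(1): $\EE[R]$ is a covariance-type quantity between two $\sqrt{n}$-consistent statistics, and I would bound it by expanding $\hat\beta-\beta$ (itself a smooth function of Hansen--Hurwitz sample moments) to first order via the delta method, so that $\EE[R]$ reduces to a sum of design covariances each of order $n^{-1}$, with higher-order terms absorbed into $o(n^{-1})$.

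This argument requires the standard regularity conditions: boundedness of the importance ratio $\pi/\pi_b$ and of the rewards, and invertibility of the population weighting matrix $\sum_\calU P(s)\pi(a|s)\phi\phi^\top$ (as already assumed for Eq.~\eqref{eq:betahat}). Under the finite-population, fixed-study-variable assumption these hold automatically and justify the law of large numbers, the central limit theorem, and the negligibility of the higher-order terms, so the entire argument becomes routine once the GREG reduction $\hat t_{\text{Reg}}=\hat t_{HH}(y)+(t_z-\hat t_{HH}(z))^\top\hat\beta$ and the residual-total identity $\hat t_{\text{lin}}=t_z^\top\beta+\frac{1}{n}\sum_{D} e_{s,a}/p_{s,a}$ are in place.
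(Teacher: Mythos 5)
Your proposal is correct and follows essentially the same route as the paper: both reduce $\hat t_\text{Reg}$ to the GREG form $\hat t_y + (t_z - \hat t_z)^\top\hat\beta$, linearize around the population WLS coefficient $\beta = A^{-1}C$ so that the leading term is the Hansen--Hurwitz estimator of the residual total, and then read off the asymptotic variance from the multinomial-design HH variance formula (the paper's Lemma~\ref{lemma2}) while controlling the remainder at order $O(n^{-1})$ in expectation. The only cosmetic difference is that you factor the remainder exactly as $(t_z-\hat t_z)^\top(\hat\beta-\beta)$ rather than Taylor-expanding jointly in $(\hat t_y,\hat t_z,\hat A,\hat C)$ as the paper does, which is the same argument in substance.
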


\paragraph{Variance estimation for the regression-assisted DR estimator.}
The exact form of the variance of the regression-assisted DR estimator is often difficult to obtain, so we use the approximate variance from Theorem \ref{thm2}. Replacing the unknown $\beta$ by the sample-based estimate $\hat\beta$, we have a variance estimator
\begin{align*}
    &\hat \V (\hat t_\text{Reg})
    = \frac{1}{n(n-1)}\\
    &\left[\sum_{(\state,a)\in \sample} \left(\frac{\pi(a|\state)}{\pi_b(a|\state)} (r(\state,a) - \phi(\state,a)^\top\hat\beta)\right)^2 - n \hat t_e^2 \right]
\end{align*}
where $\hat t_e = \sum_{\sample} \frac{\pi(a|\state)}{n\pi_b(a|s)}(r(\state,a) - \phi(\state,a)^\top\hat\beta)$. 
\cite{sarndal1989weighted} propose the weighted residual technique which can potentially result in better interval estimation. See Appendix \ref{appendix:regression_variance} for a derivation. 


Finally, we show the variance estimator is consistent and the regression-assisted estimator is asymptotically normal. 
\begin{theorem}
    \label{thm3}
    The variance estimator $\hat \VV(\hat t_\text{Reg})$ is consistent, and $\frac{\hat t_\text{Reg} - t_y}{\sqrt{\hat \VV(\hat t_\text{Reg})}} \overset{D}\to \mathcal{N}(0,1)$.
\end{theorem}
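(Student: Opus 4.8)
The plan is to reduce the studentized statistic to a classical i.i.d.\ central limit theorem by linearizing $\hat t_\text{Reg}$ around the population weighted-least-squares coefficient $\beta$. Since the inference is conditional on the past data, $\phi$ and $\beta$ are fixed, and the finiteness of $\calU=\stateSet\times\cA$ together with $\pi_b>0$ on the support guarantees that every moment I use is finite. Write the residual $e_{s,a}\defeq r(s,a)-\phi(s,a)^\top\beta$ and the difference estimator evaluated at the true coefficient,
\begin{equation*}
\hat t_\text{Diff}(\beta)=\sum_{(s,a)\in\calU}P(s)\pi(a|s)\phi(s,a)^\top\beta+\frac{1}{n}\sum_{(s,a)\in D_k}\frac{\pi(a|s)}{\pi_b(a|s)}e_{s,a}.
\end{equation*}
Because the $n$ rows of $D_k$ are i.i.d.\ draws $s\sim P$, $a\sim\pi_b$, this estimator is a constant plus an average of the i.i.d.\ terms $W_i=\frac{\pi(a_i|s_i)}{\pi_b(a_i|s_i)}e_{s_i,a_i}$; hence it is exactly unbiased for $t_y$ and the ordinary CLT yields $\sqrt{n}(\hat t_\text{Diff}(\beta)-t_y)\overset{D}\to\mathcal N(0,V_\infty)$ with $V_\infty\defeq n\,\AV(\hat t_\text{Reg})$, the numerator of the variance in Theorem~\ref{thm2}.

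Next I would control the gap between $\hat t_\text{Reg}$ and $\hat t_\text{Diff}(\beta)$. A direct rearrangement gives
\begin{equation*}
\hat t_\text{Reg}-\hat t_\text{Diff}(\beta)=(\hat\beta-\beta)^\top\Big[\sum_{(s,a)\in\calU}P(s)\pi(a|s)\phi(s,a)-\frac{1}{n}\sum_{(s,a)\in D_k}\frac{\pi(a|s)}{\pi_b(a|s)}\phi(s,a)\Big].
\end{equation*}
The bracketed vector is a population total minus its unbiased importance-sampling estimate, hence $O_p(n^{-1/2})$. It then remains to show $\hat\beta$ is $\sqrt{n}$-consistent, $\hat\beta-\beta=O_p(n^{-1/2})$: this follows by proving the sample Gram matrix in Eq.~\eqref{eq:betahat} converges in probability to its invertible population limit (a law of large numbers plus the continuous-mapping theorem for matrix inversion) and expanding $\hat\beta$ as a standard M-estimator. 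The product of the two $O_p(n^{-1/2})$ factors is $O_p(n^{-1})=o_p(n^{-1/2})$, so by Slutsky $\sqrt{n}(\hat t_\text{Reg}-t_y)$ shares the limit $\mathcal N(0,V_\infty)$.

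For consistency of the variance estimator I would show $n\,\hat\V(\hat t_\text{Reg})\overset{P}\to V_\infty$. Writing $n\,\hat\V=\frac{1}{n-1}\big[\sum_{(s,a)\in D_k}(\frac{\pi(a|s)}{\pi_b(a|s)}(r-\phi^\top\hat\beta))^2-n\hat t_e^2\big]$ and using the i.i.d.\ structure, the weak law gives $\frac{1}{n}\sum_{(s,a)\in D_k}(\frac{\pi(a|s)}{\pi_b(a|s)}(r-\phi^\top\beta))^2\overset{P}\to\sum_{(s,a)\in\calU}P(s)\frac{\pi(a|s)^2}{\pi_b(a|s)}e_{s,a}^2$ and $\hat t_e\overset{P}\to t_e$. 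Replacing $\beta$ by the consistent $\hat\beta$ perturbs these averages by $o_p(1)$ (again via $\hat\beta-\beta=O_p(n^{-1/2})$ and bounded features), so $n\,\hat\V$ converges to exactly $V_\infty$.

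Finally I would assemble the pieces with Slutsky's theorem,
\begin{equation*}
\frac{\hat t_\text{Reg}-t_y}{\sqrt{\hat\V(\hat t_\text{Reg})}}=\frac{\sqrt{n}(\hat t_\text{Reg}-t_y)}{\sqrt{n\,\hat\V(\hat t_\text{Reg})}}\overset{D}\to\frac{\mathcal N(0,V_\infty)}{\sqrt{V_\infty}}=\mathcal N(0,1).
\end{equation*}
I expect the linearization to be the main obstacle: rigorously establishing the $\sqrt{n}$-consistency of $\hat\beta$ and bounding the remainder, which hinges on invertibility of the population Gram matrix and on the importance-weighted features and residuals having the finite second moments that legitimize the two $O_p(n^{-1/2})$ terms. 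Everything downstream is routine LLN/CLT combined with Slutsky.
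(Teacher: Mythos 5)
Your proposal is correct and rests on the same linearization the paper uses --- writing $\hat t_\text{Reg}=\hat t_y+(t_z-\hat t_z)^\top\hat\beta$ and exploiting consistency of $\hat\beta$ toward the population weighted-least-squares coefficient --- but you package the normality step differently. The paper invokes joint asymptotic normality of the Hansen--Hurwitz estimators $(\hat t_y,\hat t_z)$ (citing external results), then applies Slutsky and the Delta method; you instead observe that the difference estimator evaluated at the \emph{fixed} population coefficient $\beta$ is a constant plus an average of bounded i.i.d.\ terms, apply the ordinary univariate CLT, and show the gap $(\hat\beta-\beta)^\top(t_z-\hat t_z)$ is $o_p(n^{-1/2})$. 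Your route is more self-contained and elementary (no multivariate CLT or Delta method needed); the paper's route generalizes more easily to other sampling designs where the per-draw i.i.d.\ structure is unavailable but design-based CLTs are known. One small remark: you do not actually need $\sqrt{n}$-consistency of $\hat\beta$ --- since $t_z-\hat t_z=O_p(n^{-1/2})$ already, mere consistency $\hat\beta-\beta=o_p(1)$ makes the remainder $o_p(n^{-1/2})$, which is exactly the weaker fact the paper relies on; establishing the stronger rate is harmless here (bounded features and an invertible population Gram matrix give it) but is extra work. Your treatment of the variance-estimator consistency (LLN at $\beta$ plus a perturbation argument for replacing $\beta$ by $\hat\beta$) mirrors the paper's triangle-inequality decomposition, and the final Slutsky assembly is identical.
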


Based on Theorem \ref{thm3}, we can construct a large sample CI. 
\begin{corollary}
    Let $\hat\sigma \!=\! \sqrt{\hat \VV(\hat t_\text{Reg})}$ and $z_{\alpha}$ denote the $100(1-\alpha)$ percentile of the standard normal distribution, then
    \begin{align*}
        \Pr\left(\hat t_\text{Reg} - z_{\alpha/2} \hat\sigma \leq J(\pi) \leq \hat t_\text{Reg} + z_{\alpha/2} \hat\sigma \right)
        \to 1-\alpha.
    \end{align*}
\end{corollary}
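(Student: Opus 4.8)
The plan is to derive this directly from Theorem \ref{thm3}, which already supplies all of the probabilistic content; the corollary is essentially a repackaging of the studentized limit statement into a coverage guarantee. First I would introduce the studentized statistic $Z_n \defeq (\hat t_\text{Reg} - t_y)/\hat\sigma$, where $\hat\sigma = \sqrt{\hat\VV(\hat t_\text{Reg})}$ and $t_y = J(\pi)$ by the survey-sampling identification of the policy value with the population total. Theorem \ref{thm3} states precisely that $Z_n \overset{D}{\to} \mathcal{N}(0,1)$, and this is the only input I will need.

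Next I would rewrite the coverage event in terms of $Z_n$. Since $\hat\sigma > 0$, subtracting $\hat t_\text{Reg}$ and dividing by $\hat\sigma$ shows that the event $\{\hat t_\text{Reg} - z_{\alpha/2}\hat\sigma \leq J(\pi) \leq \hat t_\text{Reg} + z_{\alpha/2}\hat\sigma\}$ coincides exactly with $\{-z_{\alpha/2} \leq Z_n \leq z_{\alpha/2}\}$. Hence the probability in question equals $\Pr(-z_{\alpha/2} \leq Z_n \leq z_{\alpha/2})$, with no approximation incurred at this stage.

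Finally I would pass to the limit. Because the limiting standard normal law has a continuous cumulative distribution function $\Phi$, the thresholds $\pm z_{\alpha/2}$ are continuity points of the limit, so convergence in distribution yields $\Pr(-z_{\alpha/2} \leq Z_n \leq z_{\alpha/2}) \to \Phi(z_{\alpha/2}) - \Phi(-z_{\alpha/2})$. Using the definition of $z_\alpha$ as the $100(1-\alpha)$ percentile, i.e. $\Phi(z_{\alpha/2}) = 1-\alpha/2$, together with the symmetry $\Phi(-z_{\alpha/2}) = \alpha/2$, the right-hand side equals $(1-\alpha/2) - \alpha/2 = 1-\alpha$, which is the claim.

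The main obstacle here is conceptual rather than technical: all of the work is already absorbed into Theorem \ref{thm3} (consistency of $\hat\VV$ combined with asymptotic normality, presumably via Slutsky's theorem), so the only care needed is the percentile bookkeeping and the observation that the boundary points carry zero mass under the continuous limit. The latter is what justifies evaluating the limiting probability at the exact thresholds $\pm z_{\alpha/2}$ rather than merely on open sets, and it is the one place where I would be explicit to avoid an off-by-a-boundary slip.
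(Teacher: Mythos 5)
Your proposal is correct and matches the paper's (implicit) reasoning: the corollary is stated as an immediate consequence of Theorem \ref{thm3}, and your argument—studentizing, rewriting the coverage event, invoking convergence in distribution at the continuity points $\pm z_{\alpha/2}$, and the percentile bookkeeping—is exactly the standard derivation the paper relies on. No gaps.
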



\section{EXPERIMENTS}

In this section, we demonstrate the effectiveness of the regression-assisted DR estimators in a semi-synthetic and a real world recommendation environment. We compare the proposed estimator to existing estimators, including IS, WIS, DM and Diff (which is DR without estimating the population total). 
We also include the IS, WIS, DM with the sliding window (SW) approach of window size $B$. When $B=0$, SW-IS and SW-WIS is the standard IS and WIS. For Diff, Reg, we use the past data $D_{k-B},\dots,D_{k-1}$ to learn a reward prediction. 


For the semi-synthetic dataset, we follow the experimental design from \cite{dudik2011doubly}. We use the supervised-to-bandit conversion to construct a partially labeled dataset from the YouTube dataset in the \href{https://www.csie.ntu.edu.tw/~cjlin/libsvmtools/datasets/multilabel.html}{LibSVM repository}.
We construct a nonstationary environment by generating a sequence of reward functions based on the environment design in \cite{chandak2020optimizing}. For each true positive context-action pair in the original classification dataset, the reward follows a sine wave with some noise over time. We use PCA to reduce the dimension of the context features to 32. The target policy is obtained by training a classifier on a small subset of the original classification dataset. 

We adapt the Movielens25m dataset \citep{harper2015movielens} for the real world experiment. To construct a nonstationary environment, we divide the rating data chronologically. Each interval contains the rating data for 60 days and we use the rating data for $K=24$ intervals ending November 21, 2019. We consider only active users who gave at least one rating for at least half of the $K$ intervals, resulting in a total number of around 2000 users. During each interval, we compute the rating matrix $r(u,g)$ for each user and genre by averaging the user $u$'s rating for all rated movies in the genre $g$. As a result, we have a sequence of rating functions which represent users' average rating for each genre over time. The user features are built by matrix factorization on the average rating data with hidden size $32$, and the target policy is obtained by training a classifier on a small subset of the average rating data. 

For the OPE objective, we consider an uniform weighting $P(\state) = 1/|\stateSet|$ for all users $\state$. We also let $n_k=\alpha|\stateSet|$ for all $k$ and $\alpha \in \{0.1, 1.0\}$. 
For each interval $k=0,\dots,K$, we sample data $D_k$ using a random policy.
For estimators that require a reward prediction, we build the reward prediction by linear regression on historical data for each action separately, which is the same approach used in \cite{dudik2011doubly}. More experiment details can be found in Appendix \ref{appendix:exp_details}, and Algorithm \ref{algo:exp_code} describes the experimental procedure.

In nonstationary OPE, the aim is to estimate $J_k(\pi)$ with data from $D_1$ to $D_k$. All of the estimators discussed in this paper, however, can be extended to predict the future values using the ideas from \citet{thomas2017predictive,chandak2020optimizing}. Suppose we have the OPE estimators for each interval up to interval $k$, that is, $\hat J_1(\pi), \dots,\hat J_k(\pi)$, we can fit these data to a forecasting model to predict the future value $J_{k+1}(\pi),\dots,J_{k+\delta}(\pi)$. 
We therefore test both settings: estimating $J_k(\pi)$ and predicting $J_{k+1}(\pi)$. For the experiments predicting $J_{k+1}(\pi)$, we adapt the method proposed in \cite{chandak2020optimizing} and predict the future values by fitting a regression.
That is, $\hat J_{k+\delta}(\pi)=\psi(k+\delta)^\top \hat\wvec_k$ where $\hat\wvec_k$ is the OLS estimator for the regression problem with feature map $\psi(t)=(cos(2\pi t n))_{n=0}^{d-1}$ and target $\hat J_t(\pi)$ for $t=1,\dots,k$, where we set $d=5$ in the experiment. 

\textbf{Sensitivity to window size and sample size.}
We vary the window size $B$ and sample size $n$, and report the sensitivity plot in Figure \ref{fig:sensitivity}.
The error is averaged over $K$ intervals, that is, $\mathrm{RMSE}=\sqrt{\frac{1}{K}\sum_{k=1}^K (\hat J_k(\pi) - J_k(\pi))^2}$.  
We can see that the sliding window (SW) estimators, including SW-IS, SW-WIS and SW-DM, are sensitive to the window size, while Diff and Reg are robust to the window size. 
Reg outperforms IS and WIS and simply using $B=1$ reduces the error by a large margin. 
Reg also has a lower error compared to Diff, especially with small window size and sample size. This suggests that Reg is more robust to a bad reward prediction from the past data, which implies it is more robust to the speed of the nonstationarity. 

\begin{figure*}[h]
    \centering
    \centering
    \includegraphics[width=0.84\textwidth]{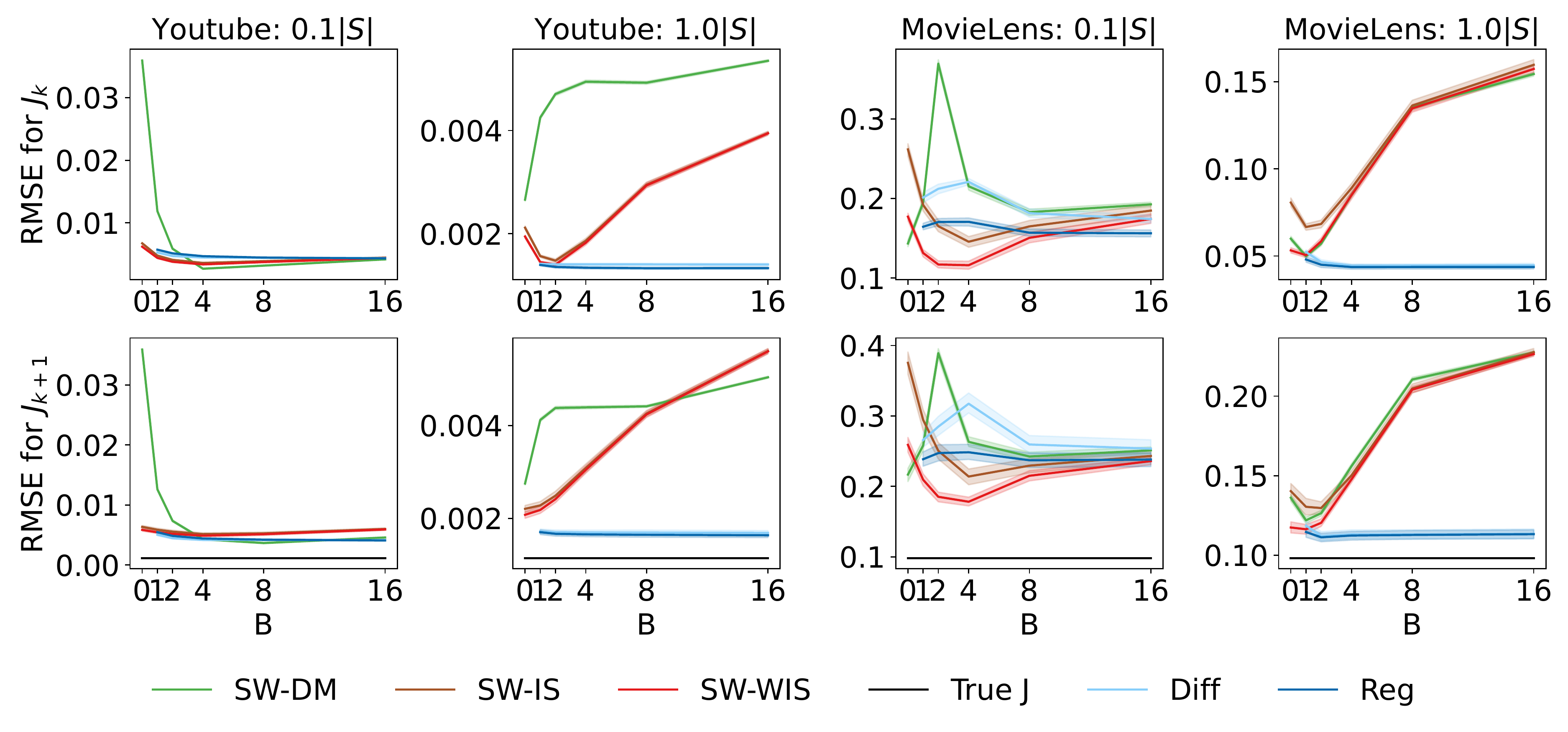}
    \caption{Sensitivity curves. \textbf{Top row: estimating $J_k(\pi)$.} \textbf{Bottom row: predicting $J_{k+1}(\pi)$.} ``True $J$'' is the baseline if we use the true values to predict the future values.
    The number are averaged over 30 runs with one standard error. Across runs, the target policy and the sequence of reward functions are fixed, but the sampled data is random. }
    \label{fig:sensitivity}
\end{figure*}

We report the error for predicting the future value $J_{k+1}(\pi)$ in Figure \ref{fig:sensitivity}. 
Reg has the lowest error for predicting the future values except in MovieLens with small sample size. 
We also find that even SW-DM and SW-IS have low error for estimating the current value $J_k$ for some hyperparameters, they still have high error for predicting the future value $J_{k+1}$. 
We hypothesize that approximately unbiased estimators generally have better future prediction even though they might have high variance. It is possible that the forecasting model cancels out the noise in approximately unbiased estimators and results in better future value prediction.

\begin{figure*}[h]
    \centering
    \centering
    \includegraphics[width=0.84\textwidth]{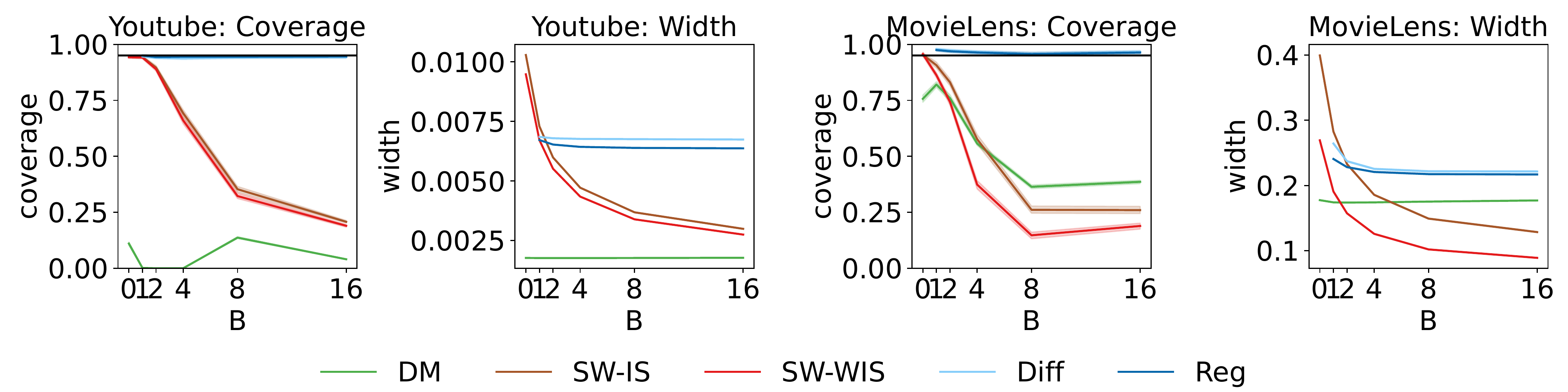}
    \caption{The empirical coverage and the width of CIs. Higher coverage and lower width is better.}
    \label{fig:coverage}
\end{figure*}

\textbf{Empirical validation of the interval estimation.}
We use $\hat J_k(\pi) \pm 1.96 \sqrt{\hat \VV(\hat J_k(\pi))}$ as the approximate $95\%$ CI. 
We report the empirical coverage of the CI using the estimated variance in Figure \ref{fig:coverage}. The empirical coverage is defined as the number of rounds such that the CI contains the true value divided by the total number of rounds $K$.
The results shown here are with $n=1.0|\stateSet|$. 
IS ($B = 0$), WIS ($B = 0$), Diff and Reg all have the desired coverage and Reg has the smallest width. All sliding window estimators have large bias when $B>0$, so the coverage is poor and it is unclear how to compare to estimators with the desired coverage.
Note that even with a small value of $B$, for example, $B=1$ in MovieLens, sliding window estimators fail to provide a valid CI. 
The result suggests that Reg provides an accurate and tight CI.

\textbf{Empirical investigation of the feature vectors.}
Besides using one past reward prediction as the only feature, we also investigate the utility when we (1) include the context features; and (2) include separate past reward predictions, that is, $\phi_k(s,a)=(1,\hat r_{k-B}(s,a),\dots,\hat r_{k-1}(s,a))$ where we learn a reward model $\hat r_t$ for interval $t$ from data $D_t$ separately. 
Since these additional features could be correlated, we use ridge regression when estimating the coefficients. The regularization parameter is chosen by cross-validation. 

We aim to answer two questions: (1) whether including the context feature or the past reward predictions helps, and (2) how we should include the past reward information. 
To answer the questions, we test five different feature vectors: (a) Reg: we use one past reward prediction as described in Section \ref{sec:reg}, with and without the context features, (b) Reg-AR: we use separate past reward predictions with and without context features, and (c) Reg-Feature: we use context features only. 
We show the comparison in Figure \ref{fig:multivariate}. 
We find that including the context features helps in general, however, using only the context features is not sufficient. 
The past reward information helps deal with nonstationarity.
Moreover, using separate predictions only improves the accuracy in MovieLens with $n=0.1|\cS|$.
In these experiments, there was no one dominant way to include past reward information, and more experimentation is needed to understand when one might be preferred.


\begin{figure*}[t]
    \centering
    \includegraphics[width=0.82\textwidth]{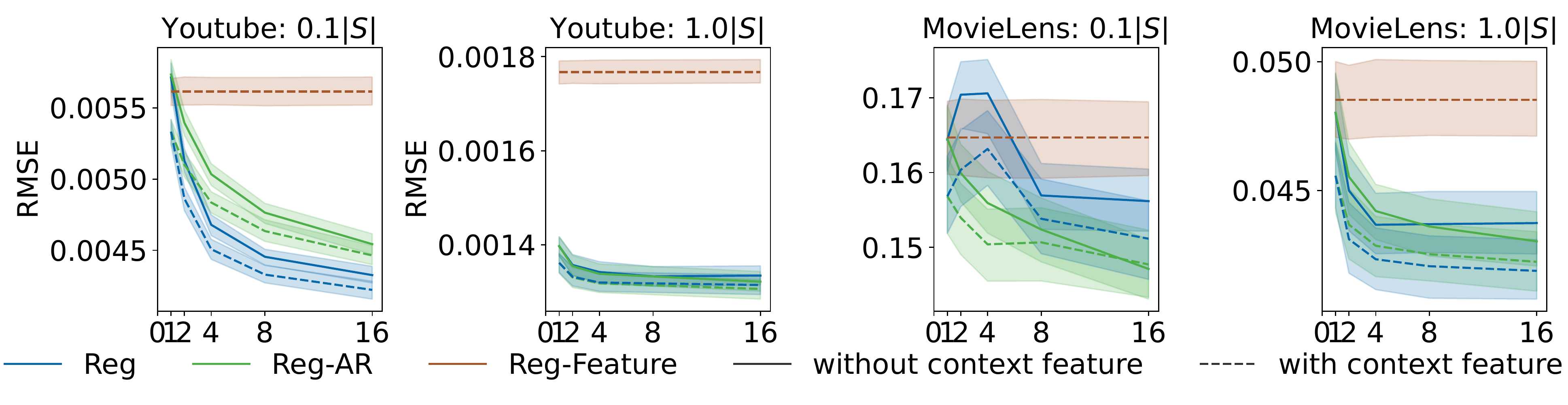}
    \caption{Comparison of different feature vectors for estimating $J_k(\pi)$. }
    \label{fig:multivariate}
\end{figure*}

We provide an ablation study to investigate the impact when the population total of the proxy value is estimated in Appendix \ref{appendix:more_exp}.
We found that using the past data to estimate the population total results in very similar performance as we know the population total. 


\begin{figure*}[!ht]
    \centering
    \centering
    \includegraphics[width=0.82\textwidth]{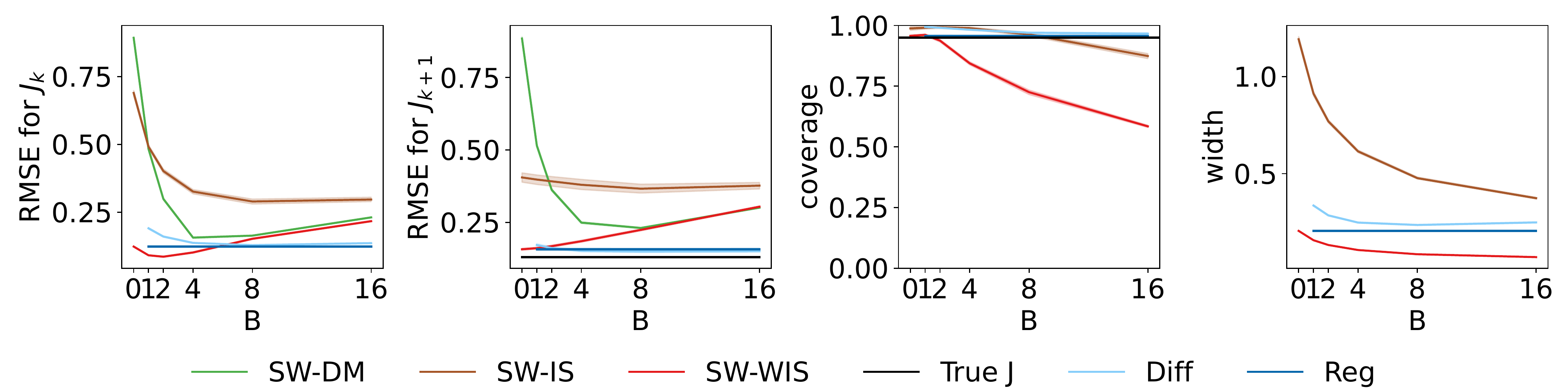}
    \caption{Results for the RL environment. \textbf{First column: estimating $J_k(\pi)$. Second column: predicting $J_{k+1}(\pi)$. Third and fourth column: coverage and width of CI}.}
    \label{fig:rl}
\end{figure*}

\section{EXTENSION TO REINFORCEMENT LEARNING}
The estimators for contextual bandits can be extended to finite horizon reinforcement learning (RL). Let $M=(\stateSet,\cA,P,r,H,\nu)$ be a finite horizon finite MDP. Our goal is to estimate the value of a policy $J(\pi)=\sum_{\tau\in(\stateSet\times\cA)^H} \bbP^\pi_M(\tau) R(\tau)$ where $\bbP^\pi_M(\tau)=\nu(s_0)\pi(a_0|s_0)P(s_1|s_0,a_0) \dots \pi(a_{H-1}|s_{H-1})$ is the probability of seeing the trajectory $\tau=(s_0,a_0,\dots,s_{H-1},a_{H-1})$ by running $\pi$ in $M$, and $R(\tau) = \sum_{h=0}^{H-1} r(s_h,a_h)$.

To formalize OPE for RL under survey sampling, let $\calU = (\stateSet\times\cA)^H$ be the population containing all trajectories and $y_{\tau} = \bbP^\pi_M(\tau)R(\tau)$ be the study variable.
Note that there are many ways to view OPE for RL in the survey sampling framework, which corresponds to different existing estimators for RL such as the trajectory-wise IS, per-decision IS (PDIS) estimator and marginalized IS estimator. We provide more discussion in Appendix \ref{appendix:rl}.

\textbf{The regression-assisted estimator with fitted Q evaluation.}
We use fitted Q evaluation (FQE), which has been shown to be effective for several stationary OPE benchmarks empirically \citep{voloshin2021empirical}, to build a proxy value $\hat R(\tau)$ for each trajectories $\tau\in\calU$.

In nonstationary environments, FQE outputs $\hat Q_{k-1}(\state,a)$ from the past data $D_{k-B},\dots,D_{k-1}$, and we use $\hat R(\tau) = \hat V_{k-1}(s_0) = \sum_{a\in\cA} \pi(a|\state_0) \hat Q_{k-1}(\state_0,a)$ as the proxy value where $\state_0$ is the initial state of the trajectory $\tau$. Similar to the estimator for contextual bandits, we first estimate the coefficient with the feature vector $\phi(s_0)^\top = (1, \hat V_{k-1}(s_0))$ and use the regression-assisted DR estimator
\begin{align*}
    &\hat t_{Reg\text{-}FQE, k}
    = \sum_{s_0\in\stateSet} \nu(s_0) \phi(s_0)^\top\hat\beta_k +\\ &\frac{1}{n} \sum_{\tau \in D_k} \frac{\pi(a_0|s_0)\dots\pi(a_{H-1}|s_{H-1})}{\pi_b(a_0|s_0)\dots\pi_b(a_{H-1}|s_{H-1})}(R(\tau) - \phi(s_0)^\top\hat\beta_k).
\end{align*}
When $\nu$ is unknown, we can estimate the population total of the proxy value by $1/|\sample'|\sum_{s_0\in \sample'} \phi(s_0)^\top\hat\beta$ from the past data $\sample'$ or the same data $\sample_k$. The regression-assisted DR estimator can be viewed as a biased-corrected FQE estimator for nonstationary environments. 

\textbf{Experimental results.}
We consider an RL environment with a binary tree structure, that is, a finite horizon MDP with $H=10$, $|\cA|=2$, $|\stateSet|=|\cA|^H-1$, and an initial state $s_0$. For each state, taking action 1 leads to the left child and taking action 2 leads to the right child. The reward for each state-action pair follows a sine wave with different frequency and amplitude. The environment mimics the session-aware recommendation problem where we take a sequence of actions for one customer during a short session. We use a random policy to collect 10 trajectories for every interval. The target policy is a trained policy using Q-learning on the underlying environment.

From Figure \ref{fig:rl}, Reg has the lowest error for estimating the current value and predicting the future value in general. We show the coverage of the one-sided CI since we mainly care about the lower bound on the policy value for safe policy improvement. The results show that Reg again provides a valid and tight interval estimation, and is promising for safe policy improvement in nonstationary RL environments. 


\section{CONCLUSION}
We proposed the regression-assisted DR estimator for OPE in the nonstationary setting, inspired by estimators from the survey sampling literature. The estimator incorporates past data into a proxy value without introducing large bias, and uses a regression approach to build a reward prediction well suited for nonstationary environments. 
As far as we know, these two ideas have not been applied to nonstationary OPE.
We theoretically show that we can construct a large sample confidence interval and empirically demonstrate that the proposed estimator provides tight and valid high-confidence estimation in several recommendation environments in contextual bandits and finite horizon RL.


\bibliographystyle{plainnat}
\bibliography{main}

\newpage
\appendix
\onecolumn


\section{OVERVIEW OF SURVEY SAMPLING}
\label{sec:ss}
In this section, we introduce the survey sampling terminology and how to use it for OPE. 
Survey sampling can be dated back to \cite{hansen1943theory,horvitz1952generalization}, where they consider the problem of selecting a sample of units from a finite population to estimate unknown population parameters.
For example, if the goal is to estimate the customer satisfaction rate for a product, survey sampling is concerned with selecting a subset of customers to conduct surveys.
Since then, the field has investigated a variety of practical scenarios, including dealing with missing data, handling non-stationarity and understanding to how to leverage auxiliary information.

Formally, let $\calU=\{1,\dots,N\}$ be the population of interest, $y_i$ be the study variable and $\xvec_i$ be the auxiliary variable for the unit $i\in \calU$. Continuing the above example, the population could be all customers, the study variable could be the satisfaction level, and the auxiliary variable could be the information about the customer. A subset of the population, called a sample, is selected according to a sampling design. We observe the study variable for units in the sample, and the goal is to estimate the population total of the study variables $t_y=\sum_{i\in\calU} y_i$. 

A sampling design $\Ivec=(I_1,\dots,I_N)$ is a random vector describing how the sample is drawn from the population: $I_i > 0$ means that the unit $i$ is selected in the sample and $I_i = 0$ means the unit is not selected. For example, a multinomial design is a with-replacement and fixed-size design where we draw $n$ units independently and identically according to probability $p_i$ with $\sum_{i\in\calU} p_i=1$. In this case, the design vector $\Ivec$ follows the multinomial distribution with parameters $n$ and $(p_i)_{i=1}^N$, that is, $P(\Ivec_1=i_i,\dots,\Ivec_N=i_N) = \frac{n!}{\Pi_{i=1}^N i_i!} p_i^{i_i} \dots p_N^{i_N}$ if $\sum_i i_i = n$ and $0$ otherwise. In survey sampling, the study variable is fixed and the randomness comes from the sampling design $\Ivec$. 

Given a sample $\sample$ of fixed size $n$, the Hansen-Hurwitz (HH) estimator \citep{hansen1943theory} for multinomial design is $\hat t_\text{HH} = \sum_{i\in\sample} \frac{y_i}{\EE[I_i]} = \sum_{i\in \sample} \frac{y_i}{np_i}$. The estimator $\hat t_\text{HH}$ is an unbiased estimator for $t_y$ if $p_i > 0$ for all $i\in\calU$.

This formalize OPE under survey sampling, let the population be $\calU=\stateSet\times\cA$ and the study variable be $y_{\state,a}=P(\state) \pi(a|\state) r(\state,a)$. The population total of $y$ is the value of the policy: $t_y =\sum_{(\state,a)\in\calU} y_{\state,a} = J(\pi)$. The weighting $P(\state)\pi(a|\state)$ goes into the study variable since the goal is to estimate the total of study variable without weighting. Even though we have $P(\state)$ in the study variable, the term often cancels out as we will see for the HH estimator. 


For OPE, the sampling design is the multinomial design with sampling probability $p_{\state,a}=P(\state)\pi(a|\state)$. Given a sample $\sample=\{(\state_i,a_i,r(\state_i,a_i))\}_{i=1}^n$, the HH estimator is
\begin{align*}
    \hat t_\text{HH} = \sum_{(\state,a)\in\sample} \frac{y_{\state,a}}{np_{\state,a}} = \sum_{(\state,a)\in\sample} \frac{P(\state) \pi(a|\state) r(\state,a)}{n P(\state) \pi_b(a|s)} = \frac{1}{n} \sum_{(\state,a)\in\sample} \frac{\pi(a|\state)}{\pi_b(a|s)} r(\state,a). 
\end{align*}
It has the same form as the IS estimator. In the case where the sampling design $p$ is not known and needs to be estimated by a propensity model, it is called the inverse propensity score (IPS) estimator. 

The HH estimator is called the \emph{design-based} estimator in the survey sampling literature. This is because the primary source of randomness is from the sampling design. Another approach is called the \emph{model-based} approach which assumes the study variables are generated by a superpopulation model. The goal is to model the relationship between the study variable and the auxiliary variable. The resulting estimator is similar to the direct method.


\textbf{The model-based approach.}
\label{appendix:model-based}
The model-based approach is a popular approach in the survey sampling literature. \cite{chambers2012introduction} provide an introduction for the model-based approach.
Different from design-based and model-assisted approaches, the study variables are assumed to be generated by a superpopulation model and typically depends on the auxiliary variable. 
More previously, we assume the values $y_i,\dots,y_n$ are realization of random variables $Y_1,\dots,Y_n$. The joint distribution of $Y_1,\dots,Y_n$ is denoted by $\xi$, which is called the superpopulation distribution. For example, we assume $\EE_\xi[Y_i | \xvec_i] = \xvec_i^\top \beta$ and $\VV_\xi(Y_i|\xvec_i) = \sigma_i^2$ for some unknown model parameter $\beta$ and $\sigma_i$.
The selected sample $\sample$ is treated as a constant and the sample values of $y_i$ are random. Estimation and inference are deduced conditional on the selected sample and the model. 


For OPE, we have $y_{\state,a}=P(\state)\pi(a|\state)r(\state,a)$ and auxiliary vector $\xvec_{\state,a}=P(\state)\pi(a|\state)\phi(\state,a)$. We assume a linear model: $\EE_\xi[Y_{\state,a}|\xvec_{\state,a}] = \xvec_{\state,a}^\top \beta$, $\VV_\xi(Y_{\state,a}|\xvec_{\state,a}) = \sigma_{\state,a}^2 = (P(\state)\pi(a|\state)\sigma)^2$, and $Y_{\state,a}$'s are independent.
Using the WLS estimator to estimate $\beta$
\begin{align*}
    \hat \beta 
    = \left(\sum_{(\state,a)\in\sample} \frac{\xvec_{\state,a} \xvec_{\state,a}^\top}{\sigma_{\state,a}^2}\right)^\dagger \left(\sum_{(\state,a)\in\sample} \frac{\xvec_{\state,a} y_{\state,a}}{\sigma_{\state,a}^2}\right)
    = \left(\sum_{(\state,a)\in\sample} \phi(\state,a) \phi(\state,a)^\top\right)^\dagger \left(\sum_{(\state,a)\in\sample} \phi(\state,a) r(\state,a)\right),
\end{align*}
we have the model-based estimator
\begin{align*}
    \hat t_\text{MB} = \sum_{(\state,a)\in\sample} y_{\state,a} + \sum_{(\state,a)\not\in\sample} \xvec_{\state,a}^\top \hat \beta. 
\end{align*}
That is, the population total is estimated by the total of study variables in the sample and the total of the study variables of units not in the sample. 

The model-based estimator is similar to the direct method (DM) in OPE. The key difference is that DM does not use the sample value of $y_{\state,a}$ but uses the prediction for all units, that is,
\begin{align*}
    \hat t_\text{DM} = \sum_{(\state,a)\in\calU} \xvec_{\state,a}^\top \hat \beta.
\end{align*}

The model-based survey sampling framework provide a way to do inference for the DM estimator, which is conditional on the selected sample and the model $\xi$. Let $t_\xvec = \sum_{(\state,a)\in\calU} \xvec_{\state,a}$, then
\begin{align*}
    \VV_\xi(\hat t_\text{DM}) = \VV_\xi\left(\sum_{(\state,a)\in\calU} \xvec_{\state,a}^\top \hat \beta\right) = t_\xvec^\top \VV_\xi(\hat \beta) t_\xvec = \sigma^2 t_\xvec^\top \left(\sum_{(\state,a)\in\sample} \phi(\state,a) \phi(\state,a)^\top\right)^\dagger t_\xvec.
\end{align*}
Plugging in the estimator $\hat \sigma^2 = \frac{1}{n-p} \sum_{(\state,a)\in\sample} (r(\state,a) - \phi(\state,a)^\top \hat\beta)^2$ for $\sigma$, we have an estimated variance 
\begin{align*}
    \hat \VV(\hat t_\text{DM}) = \hat \sigma^2 t_\xvec^\top \left(\sum_{(\state,a)\in\sample} \phi(\state,a) \phi(\state,a)^\top\right)^\dagger t_\xvec.
\end{align*}


\section{TECHNICAL DETAILS}
\label{appendix:proof}
For the theoretical analysis, we make the following assumptions: 
\begin{enumerate}
    \item $\forall(s,a)\in\calU$, $L_p \leq p_{s,a}$ for some real number $L_p>0$.
    \item $\forall(s,a)\in\calU$, $L_y \leq y_{s.a}\leq U_y$ for some real number $L_y,U_y$.
    \item $\forall(s,a)\in\calU$, $L_x \leq \phi(s,a) \leq U_x$ for some real vector $L_x,U_x$. The inequality holds element-wise. 
    \item The estimated matrix of the covariates $\sum_{(\state,a)\in D} \frac{\pi(a|\state)}{\pi_b(a|s)}\phi(\state,a) \phi(\state,a)^\top$ and the finite population matrix $\sum_{(\state,a)\in\calU} \phi(\state,a) \phi(\state,a)^\top$ are invertible. 
\end{enumerate}

In short, we need to make sure the data collection policy chooses each action with a non-zero probability, and the reward and feature vector are bounded.

\subsection{Proof of Theorem \ref{thm1}}
\label{appendix:proof1}

\begin{definition}[The ratio estimator]
    Let $z_{\state,a}\in\RR$ be the auxiliary variable, $t_z$ be the populating total of the auxiliary variable, which is assumed to be known, and $\hat t_\text{HH}$ and $\hat t_z$ be the HH estimator for $t_y$ and $t_z$ respectively.
    The ratio estimator is given by
    \begin{align*}
        \hat t_\text{Ratio} = t_z \frac{\hat t_\text{HH}}{\hat t_z}.
    \end{align*}
    \label{def:ratio}
\end{definition}

Now we show that the ratio estimator is a special case of the regression estimator. 

\begin{lemma}
    Suppose we have univariate auxiliary information $z_{s,a}$. Under the linear model $\EE_\xi[Y_{\state,a}] = \beta z_{\state,a}$ and $\VV_\xi(Y_{\state,a})=\sigma_{\state,a}^2=z_{\state,a} \sigma^2$ for some $\beta\in\RR$ and $\sigma\in\RR^+$, the regression estimator is equivalent to the ratio estimator. 
    \label{lemma1}
\end{lemma}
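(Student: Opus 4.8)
The plan is to work entirely in the survey sampling notation and show the two estimators coincide algebraically once $\hat\beta$ is simplified under the stated variance model. First I would rewrite the univariate regression-assisted (difference/GREG) estimator in the form
\begin{align*}
\hat t_\text{Reg}
= \sum_{(s,a)\in\calU} z_{s,a}\hat\beta + \sum_{(s,a)\in\sample}\frac{y_{s,a} - z_{s,a}\hat\beta}{np_{s,a}}
= \hat t_\text{HH} + \hat\beta\left(t_z - \hat t_z\right),
\end{align*}
where $t_z = \sum_{(s,a)\in\calU} z_{s,a}$ is the known population total and $\hat t_\text{HH}, \hat t_z$ are the HH estimators of $t_y, t_z$. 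This is just the difference-estimator form (Section \ref{sec:diff}) specialized to the scalar proxy $\hat y_{s,a} = z_{s,a}\hat\beta$, and it isolates exactly the quantity I must control, namely $\hat\beta$.

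Second, I would compute the weighted least squares coefficient. The population WLS objective gives the scalar normal equation $\beta = \left(\sum_\calU z_{s,a}^2/\sigma_{s,a}^2\right)^{-1}\sum_\calU z_{s,a}y_{s,a}/\sigma_{s,a}^2$, and its sample analogue replaces each population total by its HH estimator (this is precisely how Eq \eqref{eq:betahat} arises from the population objective). The decisive step is to substitute the stated variance $\sigma_{s,a}^2 = z_{s,a}\sigma^2$: one factor of $z_{s,a}$ cancels in both the numerator and the denominator, so that
\begin{align*}
\hat\beta
= \frac{\sum_{(s,a)\in\sample}\frac{1}{np_{s,a}}\,\frac{z_{s,a}y_{s,a}}{z_{s,a}\sigma^2}}{\sum_{(s,a)\in\sample}\frac{1}{np_{s,a}}\,\frac{z_{s,a}^2}{z_{s,a}\sigma^2}}
= \frac{\sum_{(s,a)\in\sample} y_{s,a}/(np_{s,a})}{\sum_{(s,a)\in\sample} z_{s,a}/(np_{s,a})}
= \frac{\hat t_\text{HH}}{\hat t_z}.
\end{align*}

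Finally, I would substitute $\hat\beta = \hat t_\text{HH}/\hat t_z$ into the form from the first step, where the $\hat\beta\hat t_z$ term cancels against $\hat t_\text{HH}$, leaving $\hat t_\text{Reg} = t_z\,\hat t_\text{HH}/\hat t_z = \hat t_\text{Ratio}$ by Definition \ref{def:ratio}. I expect the main obstacle to be the bookkeeping in the second step: correctly combining the design weights $1/(np_{s,a})$ with the model variance weights $1/\sigma_{s,a}^2$ in the WLS normal equation, and recognizing that it is specifically the variance-proportional-to-$z$ assumption (not, say, variance proportional to $z^2$) that produces the exact cancellation collapsing the WLS estimate into a ratio of HH estimators. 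A secondary care point is matching the main-text conventions (with $\phi$, $P(s)\pi(a|s)$, importance weights) to the survey-sampling conventions (with $z$, $p_{s,a} = P(s)\pi_b(a|s)$) so that the translation of Eq \eqref{eq:reg} into the difference-estimator form is unambiguous.
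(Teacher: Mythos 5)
Your proof is correct and follows essentially the same route as the paper: both reduce to showing that under the variance model $\sigma_{s,a}^2 = z_{s,a}\sigma^2$ the weighted least squares coefficient collapses to the ratio of HH estimators, $\hat\beta = \hat t_\text{HH}/\hat t_z$, after which the difference-estimator form $\hat t_\text{HH} + \hat\beta(t_z - \hat t_z)$ telescopes to $t_z\,\hat t_\text{HH}/\hat t_z = \hat t_\text{Ratio}$. The only cosmetic difference is that the paper first rearranges $\hat t_\text{Reg}$ into the $g$-weighted form $\sum_{(s,a)\in\sample} g_{s,a}\,y_{s,a}/(np_{s,a})$ and shows the weight $g_{s,a}=t_z/\hat t_z$ is constant across sampled units, whereas you simplify $\hat\beta$ directly and substitute; the underlying cancellation of one factor of $z_{s,a}$ in the WLS normal equation is identical.
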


\begin{proof}
    First note that the regression estimator has an alternative expression as 
    \begin{align*}
        \hat t_\text{Reg} 
        &= \sum_{(\state,a)\in\calU} z_{\state,a} \hat \beta + \sum_{(\state,a)\in\sample} \frac{y_{\state,a}-z_{\state,a}\hat\beta}{np_{\state,a}} \\
        &= \sum_{(\state,a)\in\sample} \frac{y_{\state,a}}{np_{\state,a}} + \left(\sum_{(\state,a)\in\calU} z_{\state,a} - \sum_{(\state,a)\in\sample} \frac{z_{\state,a}}{np_{\state,a}}\right) \hat\beta \\ 
        &= \sum_{(\state,a)\in\sample} \frac{y_{\state,a}}{np_{\state,a}} + \left(\sum_{(\state,a)\in\calU} z_{\state,a} - \sum_{(\state,a)\in\sample} \frac{z_{\state,a}}{np_{\state,a}}\right) \left(\sum_{(\state,a)\in\sample} \frac{z_{\state,a}z_{\state,a}}{np_{\state,a}\sigma_{\state,a}^2}\right)^\inv \left(\sum_{(\state,a)\in\sample} \frac{z_{\state,a}y_{\state,a}}{np_{\state,a}\sigma_{\state,a}^2}\right) 
        \\
        \\
        &= \sum_{(\state,a)\in\sample} \frac{y_{\state,a}}{np_{\state,a}} \underbrace{\left[1 +  \left(\sum_{(\state,a)\in\calU} z_{\state,a} - \sum_{(\state,a)\in\sample} \frac{z_{\state,a}}{np_{\state,a}}\right) \left(\sum_{(\state,a)\in\sample} \frac{z_{\state,a}z_{\state,a}}{np_{\state,a}\sigma_{\state,a}^2}\right)^\inv  \frac{z_{\state,a}}{\sigma_{\state,a}^2} \right]}_{g_{s.a}} 
        \\
        &= \sum_{(\state,a)\in\sample} \frac{g_{\state,a} y_{\state,a}}{np_{\state,a}}
    \end{align*}
    where $g_{\state,a}$ can be viewed as the weight for each unit in the sample. 
    
    Under the model $\sigma_{\state,a}^2=z_{\state,a} \sigma^2$, for each $(s',a')\in\sample$, we have
    \begin{align*}
        g_{\state',a'}
        &= 1 +  \left(\sum_{(\state,a)\in\calU} z_{\state,a} - \sum_{(\state,a)\in \sample} \frac{z_{\state,a}}{np_{\state,a}}\right) \left(\sum_{(\state,a)\in\sample} \frac{z_{\state,a}z_{\state,a}}{np_{\state,a}\sigma^2 z_{\state,a}}\right)^\inv \left( \frac{z_{\state',a'}}{\sigma^2 z_{\state',a'}}\right) \\
        &= 1 +  \left(\sum_{(\state,a)\in\calU} z_{\state,a} - \sum_{(\state,a)\in\sample} \frac{z_{\state,a}}{np_{\state,a}}\right) \left(\sum_{(\state,a)\in\sample} \frac{z_{\state,a}}{np_{\state,a}}\right)^\inv \\
        &= 1 + \left(\sum_{(\state,a)\in\calU} z_{\state,a}\right) \left(\sum_{(\state,a)\in\sample} \frac{z_{\state,a}}{np_{\state,a}}\right)^\inv - 1 \\
        &= t_z / \left(\sum_{(\state,a)\in\sample} \frac{z_{\state,a}}{np_{\state,a}}\right)
    \end{align*}
    The second equality follows by cancelling out the right most term with the $\sigma^2$ in the inverse bracket. 
    Note that the weight is the same for each unit in the sample. Plugging into the previous equation, we have 
    \begin{align*}
        \hat t_\text{Reg} 
        &= t_z
        \frac{\sum_{(\state,a)\in\sample} y_{\state,a}/np_{\state,a}}{\sum_{(\state,a)\in\sample} z_{\state,a}/np_{\state,a}} = \hat t_\text{Ratio}.
    \end{align*}
\end{proof}

\begin{proof}[Proof of Theorem 1]
    We first show that the WIS estimator belongs to a class of estimators called the ratio estimator in survey sampling in Definition \ref{def:ratio}. 
    Suppose the auxiliary variable $z_{\state,a} = P(\state) \pi(a|\state)$, and we know $t_z = \sum_{(\state,a)\in\calU} P(\state) \pi(a|\state)=1$. Then, the ratio estimator is 
    \begin{align*}
        \hat t_\text{Ratio}
        &= t_z \frac{\hat t_\text{HH}}{\hat t_z}
        = \left(\sum_{(\state,a)\in\sample} \frac{y_{\state,a}}{n p_{\state,a}}\right) \left(\sum_{(\state,a)\in\sample} \frac{z_{\state,a}}{n p_{\state,a}}\right)^\inv  
        = \sum_{(\state,a)\in\sample} \frac{\pi(a|\state)/\pi_b(a|s)}{\sum_{(s',a')\in\sample} \pi(a'|s')/\pi_b(a'|s')} r(\state,a)
    \end{align*}
    which is the WIS estimator in the OPE literature.
    
    Then, we prove a more general statement that a ratio estimator with univariate auxiliary information $z_{s,a}$ is a special case of the regression estimator  under the linear model $\EE_\xi[Y_{\state,a}] = \beta z_{\state,a}$ and $\VV_\xi(Y_{\state,a})=\sigma_{\state,a}^2=z_{\state,a} \sigma^2$ for some $\beta\in\RR$ and $\sigma\in\RR^+$ in Lemma \ref{lemma1}.
\end{proof}

\subsection{Proof of Theorem \ref{thm2}}
\label{appendix:proof2}

\begin{lemma}[Variance of the HH estimator for multinomial design]
    \label{lemma2}
    Let $z$ be a mapping from $\cS\times\cA$ to $[a, b]$ for two constants $a<b$, and $\hat t_\text{HH}$ be the HH estimator for the variable $y_{s,a}=P(s)\pi(a|s)z(s,a)$. With multinomial design $n$ and $p_{s,a}=P(s)\pi_b(a|s)$, the variance is given by
    \begin{align*}
        \VV(\hat t_\text{HH}) 
        &= \frac{1}{n}\left(\sum_{(\state,a)\in U} \frac{y_{\state,a}^2}{p_{\state,a}} - t_{y}^2\right) 
        = \frac{1}{n} \left(\sum_{(\state,a)\in U} P(\state)\pi(a|\state) \frac{\pi(a|\state)}{\pi_b(a|s)}z(\state,a)^2 - t_{y}^2\right).
    \end{align*}
\end{lemma}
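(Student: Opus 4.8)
The plan is to exploit the fact that a multinomial design is nothing more than $n$ independent, identically distributed draws from the population, which turns the HH estimator into a sample mean of i.i.d.\ random variables. Concretely, I would first rewrite the design: let $U_1,\dots,U_n$ be i.i.d.\ random units with $\Pr(U_j=(s,a))=p_{s,a}$, so that the design vector $\Ivec$ (which counts multiplicities) has exactly the stated multinomial law, and the HH estimator can be written as $\hat t_\text{HH}=\frac{1}{n}\sum_{j=1}^n \frac{y_{U_j}}{p_{U_j}}$. Defining $W_j \defeq y_{U_j}/p_{U_j}$, the $W_j$ are i.i.d., and Assumption~1 ($p_{s,a}\geq L_p>0$) together with boundedness of $z$ guarantees each $W_j$ is bounded, hence has a finite second moment, so all the quantities below are well defined.

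Next I would compute the first two moments of a single $W_j$ by summing against the sampling distribution. For the mean, $\EE[W_1]=\sum_{(s,a)\in\calU} p_{s,a}\,\frac{y_{s,a}}{p_{s,a}}=\sum_{(s,a)\in\calU} y_{s,a}=t_y$, which incidentally re-establishes unbiasedness. For the second moment, $\EE[W_1^2]=\sum_{(s,a)\in\calU} p_{s,a}\,\frac{y_{s,a}^2}{p_{s,a}^2}=\sum_{(s,a)\in\calU}\frac{y_{s,a}^2}{p_{s,a}}$. Since $\hat t_\text{HH}$ is the average of $n$ i.i.d.\ terms, the variance of the sample mean gives $\VV(\hat t_\text{HH})=\frac{1}{n}\VV(W_1)=\frac{1}{n}\bigl(\EE[W_1^2]-(\EE[W_1])^2\bigr)=\frac{1}{n}\bigl(\sum_{(s,a)\in\calU}\frac{y_{s,a}^2}{p_{s,a}}-t_y^2\bigr)$, which is the first claimed expression.

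Finally, I would substitute the OPE-specific forms $y_{s,a}=P(s)\pi(a|s)z(s,a)$ and $p_{s,a}=P(s)\pi_b(a|s)$ into the ratio $y_{s,a}^2/p_{s,a}$. The $P(s)^2$ in the numerator partially cancels the $P(s)$ in the denominator, yielding $\frac{y_{s,a}^2}{p_{s,a}}=P(s)\frac{\pi(a|s)^2}{\pi_b(a|s)}z(s,a)^2=P(s)\pi(a|s)\frac{\pi(a|s)}{\pi_b(a|s)}z(s,a)^2$, which matches the second displayed expression and completes the proof. I do not expect any serious obstacle here; the only point requiring care is the formalization of the multinomial design as i.i.d.\ draws so that the ``sum over the sample'' notation is interpreted with multiplicities, after which everything reduces to the elementary variance-of-a-sample-mean identity.
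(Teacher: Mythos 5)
Your proof is correct, and it takes a different route from the paper's. The paper works directly with the design vector $\Ivec$: it writes $\hat t_\text{HH} = \sum_{\calU} I_{s,a}\, y_{s,a}/(np_{s,a})$ and expands the variance into a sum of $\VV(I_{s,a}) = np_{s,a}(1-p_{s,a})$ terms plus the cross terms $\Cov(I_{s,a},I_{s',a'}) = -np_{s,a}p_{s',a'}$ from the multinomial covariance structure, then simplifies. You instead unfold the multinomial design into its underlying $n$ i.i.d.\ draws, write $\hat t_\text{HH}$ as the sample mean of the bounded i.i.d.\ variables $W_j = y_{U_j}/p_{U_j}$, and apply the elementary identity $\VV(\bar W) = \tfrac{1}{n}(\EE[W_1^2]-\EE[W_1]^2)$; your moment computations and the final substitution of $y_{s,a}$ and $p_{s,a}$ are all accurate, and you correctly flag that Assumption~1 ($p_{s,a}\geq L_p>0$) is what makes the $W_j$ well defined and square-integrable. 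Your version is shorter and avoids the multinomial covariance formulas entirely; the paper's design-vector computation is the standard survey-sampling idiom and has the advantage of generalizing to designs (e.g.\ without-replacement sampling) where no i.i.d.\ representation of the sample exists, but for the multinomial design treated here the two arguments are equally rigorous and yield the identical expression.
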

\begin{proof}
    Recall the HH estimator is
    \begin{align*}
        \hat t_\text{HH} = \sum_{\sample}  \frac{y_{s,a}}{np_{s,a}} = \sum_{\calU}  \frac{I_{s,a}y_{s,a}}{np_{s,a}}.
    \end{align*}
    where $I_{s,a}$ is the $(s,a)$-th element of the design vector $\Ivec$.
    The variance is 
    \begin{align*}
        \VV(\hat t_\text{HH}) 
        &= \VV\left(\sum_{\calU} I_{s,a} \frac{y_{s,a}}{np_{s,a}}\right) 
        = \sum_{(s,a)\in\calU}\VV(I_{s,a}) \left(\frac{y_{s,a}}{np_{s,a}}\right)^2 + \sum_{(s,a)\neq(s',a')} \Cov(I_{s,a},I_{s',a'}) \left(\frac{y_{s,a}}{np_{s,a}}\right)\left(\frac{y_{s',a'}}{np_{s',a'}}\right).
    \end{align*}
    We know $\VV(I_{s,a}) = np_{s,a}(1-p_{s.a})$ and $\Cov(I_{s,a},I_{s',a'}) = -np_{s.a}p_{s',a'}$ from the properties of the multinomial distribution, hence, after some calculation, we have $\VV(\hat t_\text{HH}) = \frac{1}{n}\left(\sum_{(s,a)\in\calU} \frac{y_{s,a}^2}{p_{s,a}} - t_y^2\right)$. 
    The proof is completed by plugging in the value of $y_{s,a}$ and $p_{s,a}$.
\end{proof}

\begin{proof}[Proof of Theorem 2]
    Note that we assume the first term in Eq \eqref{eq:betahat} is invertible. We can write the estimator as  $\hat t_\text{Reg} = \hat t_y + (t_z - \hat t_z) \hat A^\inv \hat C$ where $\hat t_y$ is the HH estimator for $t_y=\sum_\calU P(s)\pi(a|s)r(s,a)$ and $t_z=\sum_\calU \zvec_{s,a}$ and $\hat t_z$ be the HH estimator for $t_z$. $\hat A$ and $\hat C$ denote the first and second matrix is Eq \eqref{eq:betahat} respectively. Moreover, let $t_{zj}$ be the $j$-th element of the vector $t_z$, and $\hat t_{zj}$ be the $j$-th element of the vector $\hat t_{zj}$. 
    
    Let $A = \sum_{\calU} P(s)\pi(a|s)\phi(s,a)\phi(s,a)^\top$, $C=\sum_\calU P(s)\pi(a|s)\phi(s,a)r(s,a)$ and $B = A^\inv C$.
    Using the Taylor linearization technique (see Section 6.6 of \cite{sarndal2003model}), and we can approximate $\hat t_\text{Reg}$ at $\hat t_y = t_y$, $\hat t_1 = t_1$, $\hat t_z = t_z$, $\hat A=A$ and $\hat C=C$:
    \begin{align*}
        \hat t_\text{Reg} 
        &= t_y + 1(\hat t_y - t_y) - \sum_j B_j (\hat t_{z,j} - t_{z,j})  + \sum_{i,j} (t_z - t_z)^\top [-A^\inv E_{ij} A^\inv] C (\hat A_{ij} - A_{ij}) + \\
        & \ \ \ \ \sum_{j} (t_z - \hat t_z)^\top e_{j} (\hat C_j - C_j) + \dots \\
        &= \hat t_y + (t_z - \hat t_z)^\top B + ...
    \end{align*}
    where $E_{ij}$ is a matrix where the $ij$- and $ji$-th elements are one and all other elements are zero, and $e_j$ is a vector where the $j$-th element is one and zero otherwise. 
    
    Since the random variable is bounded, the moments exist. Taking the expectation, we get
    \begin{align}
        \EE[\hat t_\text{Reg}]
        &= \EE[\hat t_y + (t_z - \hat t_z)^\top B] + O(n^{-1}) 
        = t_y + O(n^{-1}).
        \label{eq:approximation}
    \end{align}
    The first equality follows from the remainder terms of the Taylor expansion are the expectations of $(\hat t_y - t_y)^p$ and $(\hat t_{z,j} - t_{z,j})^p$ for $p\geq2$, which is of order $O(1/n)$.
    The second equality follows from $\hat t_z$ is an unbiased estimator for $t_z$. 
    Therefore, $\hat t_\text{Reg}$ is asymptotically unbiased. 
    
    Furthermore, 
    \begin{align*}
        \VV(\hat t_\text{Reg}) 
        &= \EE[(\hat t_\text{Reg} - \EE[\hat t_\text{Reg}])^2] \\ 
        &= \EE[(\hat t_\text{Reg} - t_y + t_y - \EE[\hat t_\text{Reg}])^2] \\
        &= \EE[(\hat t_\text{Reg} - t_y)^2 + (t_y - \EE[\hat t_\text{Reg}])^2 + 2(\hat t_\text{Reg} - t_y)(t_y - \EE[\hat t_\text{Reg}])] \\
        &= \EE[(\hat t_\text{Reg} - t_y)^2 ] + o(n^\inv) \\
        &= \EE[(\underbrace{\hat t_y - \hat t_z B}_{(a)} - t_y + t_xB)^2] + o(n^\inv)
    \end{align*}
    The last equality comes from Eq \eqref{eq:approximation}. Note that (a) is the HH estimator $\hat t_e = \sum_\sample \frac{\pi(a|s)}{n\pi_b(a|s)} (r(s,a) - \phi(s,a)^\top B)$ so the expectation (the first term in the last line) is the variance of (a) which is given by
    $\frac{1}{n} \left(\sum_{(\state,a)\in U} P(\state)\pi(a|\state) \frac{\pi(a|\state)}{\pi_b(a|s)}(r(s,a) - \phi(s,a)^\top B)^2 - t_{e}^2\right)$ with $t_e = \sum_\calU P(s)\pi(a|s) (r(s,a) - \phi(s,a)^\top B)$ by Lemma \ref{lemma2}.
    
    Since the variance converges to zero and the estimator is asymptotically unbiased, we also know $\hat t_\text{Reg}\overset{p}\to t_y$.
\end{proof}

\subsection{Proof of Theorem \ref{thm3}}
\begin{proof}[Proof of consistency]
    Define 
    \begin{align*}
        \hat \V_n (\beta)
        &= \frac{1}{n(n-1)} \left[\sum_{(\state,a)\in\sample} \left(\frac{\pi(a|\state)}{\pi_b(a|\state)} (r(\state,a) - \phi(\state,a)^\top\beta)\right)^2 - n \hat t_e(\beta)^2 \right], \text{ and } \\
        \VV_n(\beta) 
        &= \frac{1}{n} \left(\sum_{(\state,a)\in U} P(\state)\pi(a|\state) \frac{\pi(a|\state)}{\pi_b(a|s)}(r(\state,a) - \phi(\state,a)^\top \beta)^2 - t_e(\beta)^2\right)
    \end{align*}
    where $\hat t_e(\beta) = \sum_{\sample} \frac{\pi(a|\state)}{n\pi_b(a|s)}(r(\state,a) - \phi(\state,a)^\top\beta)$ and $t_e(\beta) = \sum_\calU P(s)\pi(a|s)(r(\state,a) - \phi(\state,a)^\top \beta)$.
    Then it is sufficient to show that
    \begin{align*}
        n|\hat \V_n (\hat \beta_n) - \VV_n(\beta_{WLS})| \overset{p}\to 0.
    \end{align*}
    
    For $\epsilon>0$, by the triangle inequality,
    \begin{align*}
        \Pr(n|\hat \V_n (\hat \beta_n) - \VV_n(\beta_{WLS})| > \epsilon) 
        &\leq \Pr(n|\hat \V_n (\hat \beta_n) - \hat \V_n (\beta_{WLS})|>\epsilon/2) + \Pr(n|\hat \V_n (\beta_{WLS}) - \VV_n(\beta_{WLS})|>\epsilon/2).
    \end{align*}
    For the first term, using the fact that $\hat \beta_n\overset{p}\to\beta_{WLS}$ and the continuous mapping theorem, we get $\hat \V_n (\hat\beta_n) \overset{p}\to \hat \V_n (\beta_{WLS})$, which implies $\lim_{n\to\infty} \Pr(n|\hat \V_n (\hat \beta_n) - \hat \V_n (\beta_{WLS})|>\epsilon/2) = 0$.
    
    Define $e_{s,a}(\beta) = r(\state,a) - \phi(\state,a)^\top\beta$, $t_{we^2}(\beta) = \sum_\calU P(s)\pi(a|s)\frac{\pi(a|\state)}{\pi_b(a|s)}e_{s,a}(\beta)^2$ (the first term of $\VV_n(\beta)$) and $\hat t_{we^2}(\beta) = \sum_{\sample} \left(\frac{\pi(a|\state)}{n\pi_b(a|\state)} e_{s,a}(\beta)\right)^2$ (the first term of $\hat\VV(\beta)$).
    Then, for the second term, we have
    \begin{align*}
        & \Pr(n\left|\hat \V_n (\beta_{WLS}) - \VV_n(\beta)\right|>\epsilon/2) \\
        &= \Pr(\left|\frac{n}{n-1} \hat t_{we^2}(\beta_{WLS}) - \frac{n}{n-1} \hat t_e(\beta_{WLS})^2 - t_{we^2}(\beta_{WLS}) + t_e(\beta_{WLS})^2 \right|>\epsilon/2)\\
        &\leq \Pr(\left|\frac{n}{n-1} \hat t_{we^2}(\beta_{WLS}) - t_{we^2}(\beta_{WLS}) \right|>\epsilon/4) + \Pr( \left|\frac{n}{n-1} \hat t_e(\beta_{WLS})^2  - t_e(\beta_{WLS})^2\right|<\epsilon/4).
    \end{align*}
    Note that $\hat t_{we^2}(\beta_{WLS})$ and $\hat t(\beta_{WLS})^2$ are the HH estimators for $t_{we^2}(\beta_{WLS})$ and $t_e(\beta_{WLS})$ respectively, so they are consistent. As a result, we have
    \begin{align*}
        \lim_{n\to\infty} \Pr(n\left|\hat \V_n (\hat\beta_{WLS}) - \VV_n(\beta_{WLS})\right|>\epsilon/2) = 0, 
    \end{align*}
    which completes the proof.
\end{proof}

\begin{proof}[Proof of asymptotic normality]
    It is known that the HH estimator for with-replacement sampling is asymptotically normal (for example, see Theorem 2 of \cite{felix2003asymptotics} or \cite{mcconville2011improved}), that is,
    \begin{align*}
        \begin{bmatrix}
        \sqrt{n}(\hat t_y - t_y)\\
        \sqrt{n}(\hat t_z - t_z)
        \end{bmatrix} \overset{D}\to \mathcal{N}\left(0, 
        \begin{bmatrix}
        \Sigma^y & \mathbf\Sigma^{yz} \\
        \mathbf\Sigma^{zy} & \mathbf\Sigma^{z}
        \end{bmatrix}\right)
    \end{align*}
    where $\Sigma^y,\mathbf\Sigma^{yz},\mathbf\Sigma^{zy}$ and $\mathbf\Sigma^{z}$ are the limiting covariance matrices. 
    Then we follow the proof idea from Theorem 3.2 of \cite{mcconville2011improved}. 
    Using the Slutsky’s Theorem and the fact that $\hat \beta_n \overset{p}\to \beta_{WLS}$, we have 
    \begin{align*}
        \begin{bmatrix}
        \sqrt{n}(\hat t_y - t_y)\\
        \sqrt{n}(\hat t_z - t_z)\hat \beta_n
        \end{bmatrix} \overset{D}\to \mathcal{N}(0, 
        \begin{bmatrix}
        \Sigma^y & \mathbf\Sigma^{yz} \beta_{WLS} \\
        \beta_{WLS}^\top \mathbf\Sigma^{zy} & \beta_{WLS}^\top \mathbf\Sigma^{z} \beta_{WLS}
        \end{bmatrix}).
    \end{align*}
    Note that $\sqrt{n}(\hat t_\text{Reg}-t_y) = \sqrt{n}(\hat t_y - t_y) - \sqrt{n}(\hat t_z - t_z)\hat \beta_n$. By the Delta method, we have 
    $\sqrt{n}(\hat t_\text{Reg}-t_y) \overset{D}\to \mathcal{N}(0,\sigma^2)$
    where $\sigma^2 = \Sigma^y - \Sigma^{yz} \beta_{WLS} - \beta_{WLS}^\top \Sigma^{zy} + \beta_{WLS}^\top \Sigma^{z} \beta_{WLS}$.
    Note that we can write the variance of $\hat t_y - \hat t_z \beta_{WLS}$ as $\VV(\hat t_y - \hat t_z \beta_{WLS}) = \frac{1}{n}(\Sigma^y - \Sigma^{yz} \beta_{WLS} - \beta_{WLS}^\top \Sigma^{zy} + \beta_{WLS}^\top \Sigma^{z} \beta_{WLS})$, and in the proof for Theorem \ref{thm2}, we show that the asymptotic variance of $\hat t_\text{Reg}$ is $\AV(\hat t_\text{Reg})=\VV(\hat t_y - \hat t_z \beta_{WLS})$.
    Therefore,
    $\AV(\hat t_\text{Reg})=\sigma^2/n$, and 
    $(\hat t_\text{Reg}-t_y)/\sqrt{\AV(\hat t_\text{Reg})} \overset{D}\to \mathcal{N}(0,1)$.
    
    By the consistency of the variance estimator and Slutsky's theorem, we have 
    \begin{align*}
        \frac{\hat t_\text{Reg}-t_y}{\sqrt{\hat \VV(\hat t_\text{Reg})}} 
        = \frac{\hat t_\text{Reg}-t_y}{\sqrt{\AV(\hat t_\text{Reg})}} \frac{\sqrt{\AV(\hat t_\text{Reg})}}{\sqrt{\hat \VV(\hat t_\text{Reg})}}
        \overset{D}\to \mathcal{N}(0,1).
    \end{align*} 
\end{proof}

\section{VARIANCE ESTIMATION}
\label{appendix:regression_variance}
In this section, we provide the variance estimation for all estimators used in our experiments. 

\paragraph{Variance estimation for the IS estimators.}
For the IS estimator from the Monte Carlo literature, we first note that the estimator can be written as $\frac{1}{n}\sum_i W_i R_i$ where $W_i=\frac{\pi(A_i|\stateRV_i)}{\pi_b(A_i|\stateRV_i)}$ and $R_i = r(\stateRV_i,A_i)$.
Then, the variance is given by $\VV(\hat J_\text{IS}(\pi)) = \frac{1}{n} \VV\left(W R\right)$ due to the i.i.d. property, and $\VV\left(W R\right)$ can be estimated by the sample variance. Therefore, we have an unbiased variance estimator
\begin{align*}
    \hat \VV(\hat J_\text{IS}(\pi)) 
    = \frac{1}{n} \left[\frac{1}{n-1}\sum_{i=1}^n \left(W_i R_i - \hat J_\text{IS}(\pi)\right)^2\right].
\end{align*}

For the HH estimator from the survey sampling literature, we can use the Sen-Yates-Grundy variance estimator \citep{sen1953estimate,yates1953selection} for the multinomial design, which is given by
\begin{align*}
    \hat \VV(\hat t_\text{HH}) 
    = \frac{1}{n(n-1)} \left[\sum_{(\state,a)\in\sample} \left(\frac{\pi(a|\state)}{\pi_b(a|x)} r(\state,a)\right)^2 - n \hat t_\text{HH}^2 \right].
\end{align*}
The variance estimator $\hat \VV(\hat t_\text{HH})$ is an unbiased estimator for the true variance $\VV(\hat t_\text{HH})$. Interestingly, it also has the same form as the variance estimator for the IS estimator.


\paragraph{Variance estimation for the WIS estimator.}
Using the Taylor linearization technique, the ratio estimator is approximately by $\hat t_\text{Ratio} = t_x \hat R t_x \approx R + (\hat t_y - R \hat t_x)$.
Define $u_{s,a} = y_{s,a}  - Rx_{s,a} $, $t_u = \sum_{i\in\calU} u_{s,a} $ and $\hat t_u = \frac{1}{n}\sum_{\sample} \frac{u_{s,a} }{p_{s,a} }$, then we have an approximate variance $\AV(\hat t_\text{Ratio}) = \V(\hat t_u)$. 
Based on the approximation, the estimated variance is given by
\begin{align*}
    \hat \VV(\hat t_\text{WIS}) 
    &= \frac{1}{n(n-1)} \left[\sum_{(\state,a)\in\sample} \left(\frac{\pi(a|\state)}{\pi_b(a|x)} (r(\state,a) - \hat t_\text{WIS})\right)^2 - n \left(\frac{1}{n} \sum_{(\state,a)\in\sample} \frac{\pi(a|\state)}{\pi_b(a|s)}(r(\state,a) - \hat t_\text{WIS})\right)^2\right].
\end{align*}

\paragraph{Variance estimation for the difference and DR estimator.}
Since the first term of the difference estimator is fixed, the variance of the difference estimator equals to the variance of the HH estimator $\hat t_{\Delta} = \frac{1}{n}\sum_{(\state,a) \in\sample} \frac{\pi(a|\state)}{ \pi_b(a|s)}\Delta(\state,a)$ where $\Delta(\state,a) = r(\state,a) - \hat r(\state,a)$. Then the variance estimator is
\begin{align*}
    \hat \VV(\hat t_\text{Diff}) 
    = \frac{1}{n(n-1)} \left[\sum_{(\state,a)\in \sample} \left(\frac{\pi(a|\state)}{\pi_b(a|x)} \Delta(\state,a)\right)^2 - n \hat t_{\Delta}^2 \right].
\end{align*}

For the DR estimator where the first term is also estimated from $\sample$, first note that the DR estimator can be written as 
\begin{align*}
    \hat t_\text{DR}
    &= \frac{1}{n} \sum_{(\state,a) \in \sample} \frac{(\pi(a|x) (r(\state,a) - \hat r(\state,a))+ \pi_b(a|s) \hat r_\pi(\state))}{\pi_b(a|s)}
\end{align*}
which is the HH estimator for $t = \sum_{\calU} P(\state) (\pi(a|\state)(r(\state,a) - \hat r(\state,a))+ \pi_b(a|s) \hat r_\pi(\state))$. Therefore, we have the variance estimator
\begin{align*}
    \hat \VV(\hat t_\text{DR}) 
    = \frac{1}{n(n-1)} \left[\sum_{(\state,a)\in\sample} \left(\frac{(\pi(a|x) (r(\state,a) - \hat r(\state,a))+ \pi_b(a|s) \hat r_\pi(\state))}{\pi_b(a|s)}\right)^2 - n \hat t_\text{DR}^2 \right].
\end{align*}

\paragraph{Variance estimation for the regression-assisted estimator.}


We briefly describe the weighted residual
technique from \cite{sarndal1989weighted}. Using the definition of $g_{\state,a}=1+(t_x-\hat t_x)(\sum_{i\in\sample} \frac{\pi(a|\state)\phi(\state,a) \phi(\state,a)^\top}{n \pi_b(a|s)})^\inv \phi(\state,a)$, the regression estimator can be written as 
\begin{align*}
    \hat t_\text{Reg} = \sum_{(\state,a)\in\calU} P(\state)\pi(a|\state)\phi(\state,a)^\top\beta_{WLS} + \sum_{(\state,a)\in\sample} \frac{\pi(a|\state)}{n \pi_b(a|s)}g_{\state,a}(r(\state,a) - \phi(\state,a)^\top\beta_{WLS}).
\end{align*}

It follows that 
\begin{align*}
    \VV(\hat t_\text{Reg}) = \VV\left(\sum_{(\state,a)\in\sample} \frac{ \pi(a|\state)}{n \pi_b(a|s)}g_{\state,a}(r(\state,a) - \phi(\state,a)^\top\beta_{WLS})\right)
\end{align*}
which is the variance of the HH estimator $\hat t_e = \sum_\sample \frac{\pi(a|\state)}{\pi_b(a|s)}g_{sa}(r(\state,a) - \phi(\state,a)^\top\beta)$ for $t_e = \sum_{\calU} P(\state)\pi(a|\state)g_{\state,a}(r(\state,a) - \phi(\state,a)^\top\beta)$. 
Ignoring the fact that the weight $g_{\state,a}$ is sample dependent and replacing $\beta_{WLS}$ with $\hat \beta$, we have the $g$-weighted variance estimator
\begin{align*}
    \hat \VV(\hat t_\text{Reg})
    = \frac{1}{n(n-1)} \left[\sum_{(\state,a)\in\sample} \left(\frac{\pi(a|\state)}{\pi_b(a|x)} g_{sa} (r(\state,a) - \phi_{\state,a}\hat\beta)\right)^2 - n \hat t_e^2 \right].
\end{align*}


\section{EXTENSION TO RL}
\label{appendix:rl}
In the main paper, we discuss OPE for RL by treating each trajectory as one unit in a population containing all possible trajectories. That is, let $\calU = (\stateSet\times\cA)^H$ be the population containing all trajectories and $y_{\tau} = \bbP^\pi_M(\tau)R(\tau)$ be the study variable.
We use multinomial design with $p_{\tau} = \bbP^{\pi_b}_M(\tau)$ to obtain a sample of trajectories $\sample$. 
The resulting HH estimator has the same form as the trajectory-wise IS estimator \citep{sutton1998introduction}.
, that is,
\begin{align*}
    \hat t_\text{IS}
    = \frac{1}{n} \sum_{\tau \in \sample} \frac{\bbP^{\pi}_M(\tau)}{\bbP^{\pi_b}_M(\tau)} R(\tau)
    = \frac{1}{n} \sum_{\tau \in \sample} \frac{\pi(a_0|s_0)\dots\pi(a_{H-1}|s_{H-1})}{\pi_b(a_0|s_0)\dots\pi_b(a_{H-1}|s_{H-1})} R(\tau). 
\end{align*}

However, there are many other ways to view OPE for RL in the survey sampling framework. One way is to consider estimating the expected reward at each horizon separately. In this case, for $h=0,\dots,H-1$, let $\calU_h = (\stateSet\times\cA)^{h+1}$ and our goal is to estimate 
\[J_h(\pi)=\sum_{\tau \in \calU_h}\nu(s_0)\pi(a_0|s_0)P(s_1|s_0,a_0) \dots \pi(a_{h}|s_{h}) r(s_{h},a_{h})\]
which is the expected reward at horizon $h$ under policy $\pi$. It is easy to see that $J(\pi)=\sum_{h=0}^{H-1}J_h(\pi)$. Therefore, this can be viewed as stratified sampling where stratum $h$ contain all trajectories of horizon $h$. Using the IS estimator to estimate $J_h(\pi)$ for each horizon, we get the per-decision IS estimator \citep{precup2000eligibility}. That is,
\begin{align}
    \hat t_\text{PDIS}
    = \sum_{h=0}^{H-1} \hat J_h(\pi)
    = \sum_{h=0}^{H-1} \frac{1}{n} \sum_{\tau \in\sample} \frac{\pi(a_0|s_0)\dots\pi(a_{h}|s_{h})}{\pi_b(a_0|s_0)\dots\pi_b(a_{h}|s_{h})} r(s_h,a_h). 
    \label{eq:PDIS}
\end{align}
The sampling at each horizon might depend on the sampling at the previous horizon. In that case, we can't easily get the variance or an variance estimator since $V(\hat J(\pi))\neq\sum_{h=0}^{H-1}V(\hat J_h(\pi))$. However, we can still compute the variance and an variance estimator via a recursive form \citep{jiang2016doubly}. 

Another way is to consider sampling transitions instead of episode. Let $\calU = \stateSet\times\cA$ and for $(\state,a)\in\calU$, $y_{\state,a} = d^\pi(\state,a) r(\state,a)$ where $d^\pi(\state,a)=(\sum_{h=0}^{H-1} \bbP^\pi_M(S_h=\state,a_h=a))/H$. We use multinomial design with $p_{\state,a} = \mu(\state,a)$ where $\mu$ is a data distribution that we can use to sample data. 
However, for this formulation, $d^\pi(\state,a)$ is usually unknown, and there are existing work on estimating the density ratio \citep{liu2018breaking}. 


\section{EXPERIMENT DETAILS}
\label{appendix:exp_details}

The goal of the experiment design is to model the recommendation system where reward associated with each user-item pair changes over time. For the Youtube dataset, we generate a sequence of reward functions based on the non-stationary recommendation environment used in \cite{chandak2020optimizing}. For each positive context-action pair in the original classification dataset, the reward follows a sine wave with noises. More precisely, $r_k(s,a)=0.5 + \text{amplitude}_{s,a} * \sin(k* \text{frequency}_{s,a}) + 0.01 \varepsilon$ where $\varepsilon\sim\text{Unif}([0,1])$. For each interval, we also randomly sample some context-action pairs and set their rewards to positive random values to increase the noise. 

To obtain a target policy for the Youtube and MovieLens dataset, we fist train a classifier on a small subset of the original multi-label classification dataset. Then we apply the softmax function on the outputs of the trained classifiers to obtain a probability distribution over actions for each context. The conditional distribution is used as the target policy.  

Similarly for the RL environment, the reward follows $r_k(s,a)= \mu_{s,a} + 0.25 * \sin(k* \text{frequency}_{s,a}) + 0.01 * \varepsilon$ where $\varepsilon\sim\text{Unif}([0,1])$. To obtain a target policy, we fist train a Q-learning agent on the underlying environment for $1000$ episodes and then apply the softmax function on the Q-value as the target policy. 

The summary statistics of the dataset are: 
\begin{table}[h!]
    \centering
    \begin{tabular}{c|c|c}
        & $|\stateSet|$ & $|\cA|$ \\
        \hline
        Youtube & 31703 & 47 \\
        MovieLens & 1923 & 19 \\
    \end{tabular}
\end{table}

We provide a pseudocode for our experiment procedure in Algorithm \ref{algo:exp_code}.  
\begin{algorithm}[h]
    \caption{Non-stationary OPE experiment with regression-assisted DR estimator}
    \label{algo:exp_code}
    \begin{algorithmic}
        \STATE Input: a non-stationary environment $M$, a target policy $\pi$, a behavior policy $\pi_b$, window size $B$, a prediction subroutine $Pred(X,Y,x_{test})$ using linear regression with basis function $\psi$
        \FOR {$k=0,\dots,K$}
            \STATE Collect a dataset $D_k=\{(s_i,a_i,r_k(s_i,a_i),\pi_b(a_i|s_i))\}_{i=1}^n$ from $M$
            \IF {$k>0$}
                \STATE \emph{\# Estimate the current value}
                \STATE Build a reward prediction $\hat r_{k}$ from the past data $D_{k-B},\dots,D_{k-1}$ 
                \STATE Compute $\hat\beta_k$ from $D_k$ using Eq \eqref{eq:betahat} with $\phi(s,a)=(1,\hat r_{k}(s,a))$
                \STATE Compute $\hat t_{Reg,k}$ from $D_k$ using Eq \eqref{eq:reg}
                \STATE \emph{\# Predict the future value}
                \STATE $\hat t_{Pred,k+1} = Pred(X=[\psi(1),\dots,\psi(k)], Y=[\hat t_{Reg,1},\dots,\hat t_{Reg,k}], x_{test}=\psi(k+1))$
            \ENDIF
        \ENDFOR
        \STATE Compute the true value $J_{1}(\pi),\dots,J_{k}(\pi)$
        \STATE Output: 
        \STATE 
        $\mathrm{RMSE}_{Reg}=\sqrt{\frac{1}{K}\sum_{k=1}^K (\hat t_{Reg,k} - J_k(\pi))^2}$ \emph{\# Error for estimating the current value}
        \STATE  $\mathrm{RMSE}_{Pred}=\sqrt{\frac{1}{K}\sum_{k=2}^K (\hat t_{Pred,k} - J_k(\pi))^2}$ \emph{\# Error for predict the future value}
    \end{algorithmic}
\end{algorithm}


\section{ADDITIONAL EXPERIMENTS}
\label{appendix:more_exp}
We provide more experiment results in this section. 

\paragraph{Estimating the population total of the proxy values.}
For the experiments in the main paper, we use the population total of the proxy values for the DM, Diff and Reg estimator. In this experiment, we test the regression-assisted estimator when the population total of the proxy values are being estimated, that results in the estimator using Eq \eqref{eq:DR}, which we call RegDR, and the estimator with an independent survey $D'$, described in the last paragraph of Section \ref{sec:diff}, which we call RegDR2. 

In Figure \ref{fig:total}, we found that RegDR2 has a similar RMSE compared to Reg, and both are slightly better than RegDR. This suggests that using an independent survey has potential to improve the standard DR estimator in the non-stationary setting. 
Moreover, even the population total needs to be estimated, the regression-assisted estimator can still perform well using RegDR2.

\begin{figure}[t]
    \centering
    \includegraphics[width=0.9\textwidth]{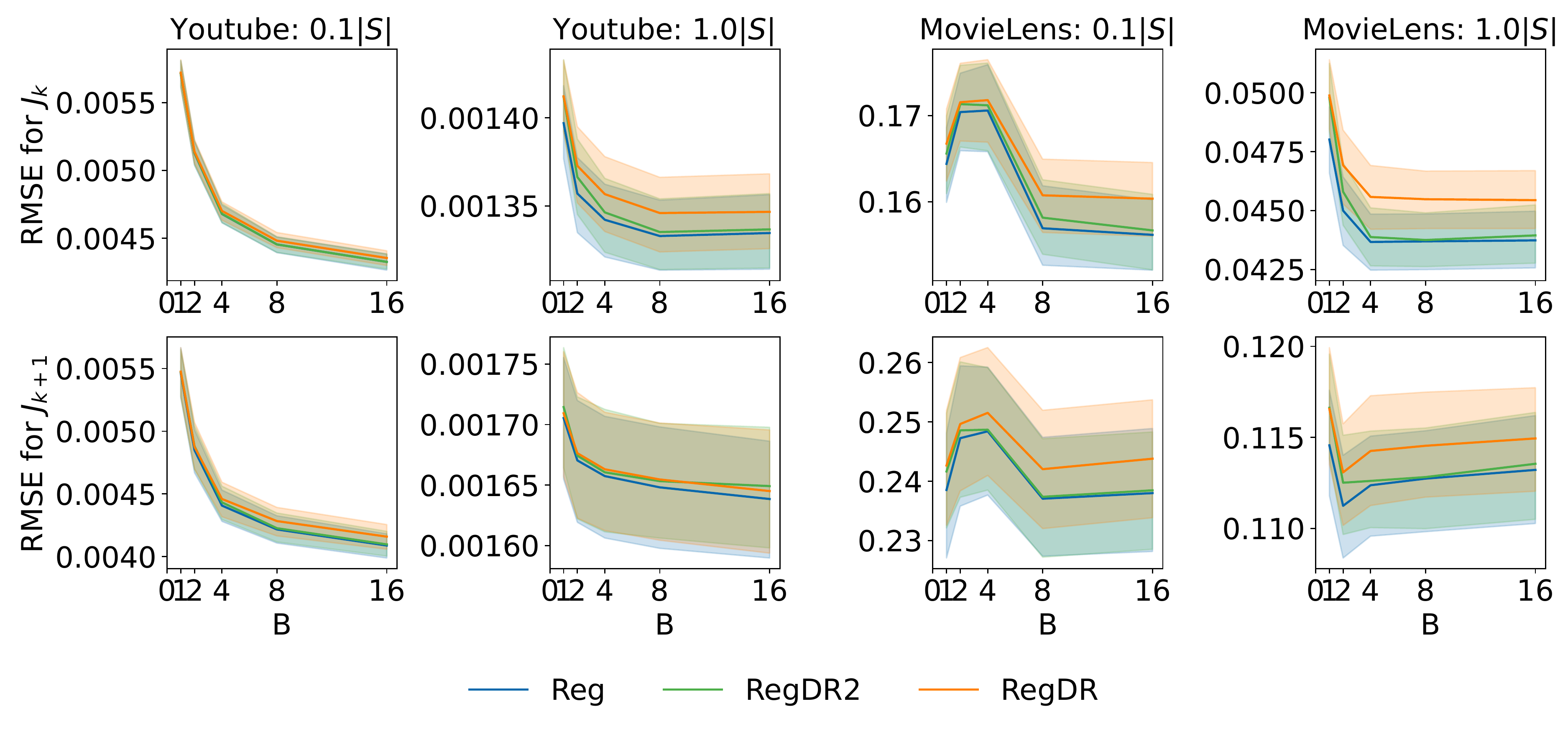}
    \caption{Comparison when the population total of the proxy value is estimated. \textbf{Top: estimating $J_k(\pi)$. Bottom: predicting $J_{k+1}(\pi)$.}}
    \label{fig:total}
\end{figure}



\paragraph{Comparison to PDIS.}
For the RL experiment, we compare Reg to PDIS in Eq \eqref{eq:PDIS}, which is expected to improve over the standard trajectory-wise IS estimator. The result in Figure \ref{fig:PDIS} shows that Reg still outperforms PDIS. 

\begin{figure}[t]
    \centering
    \includegraphics[width=0.5\textwidth]{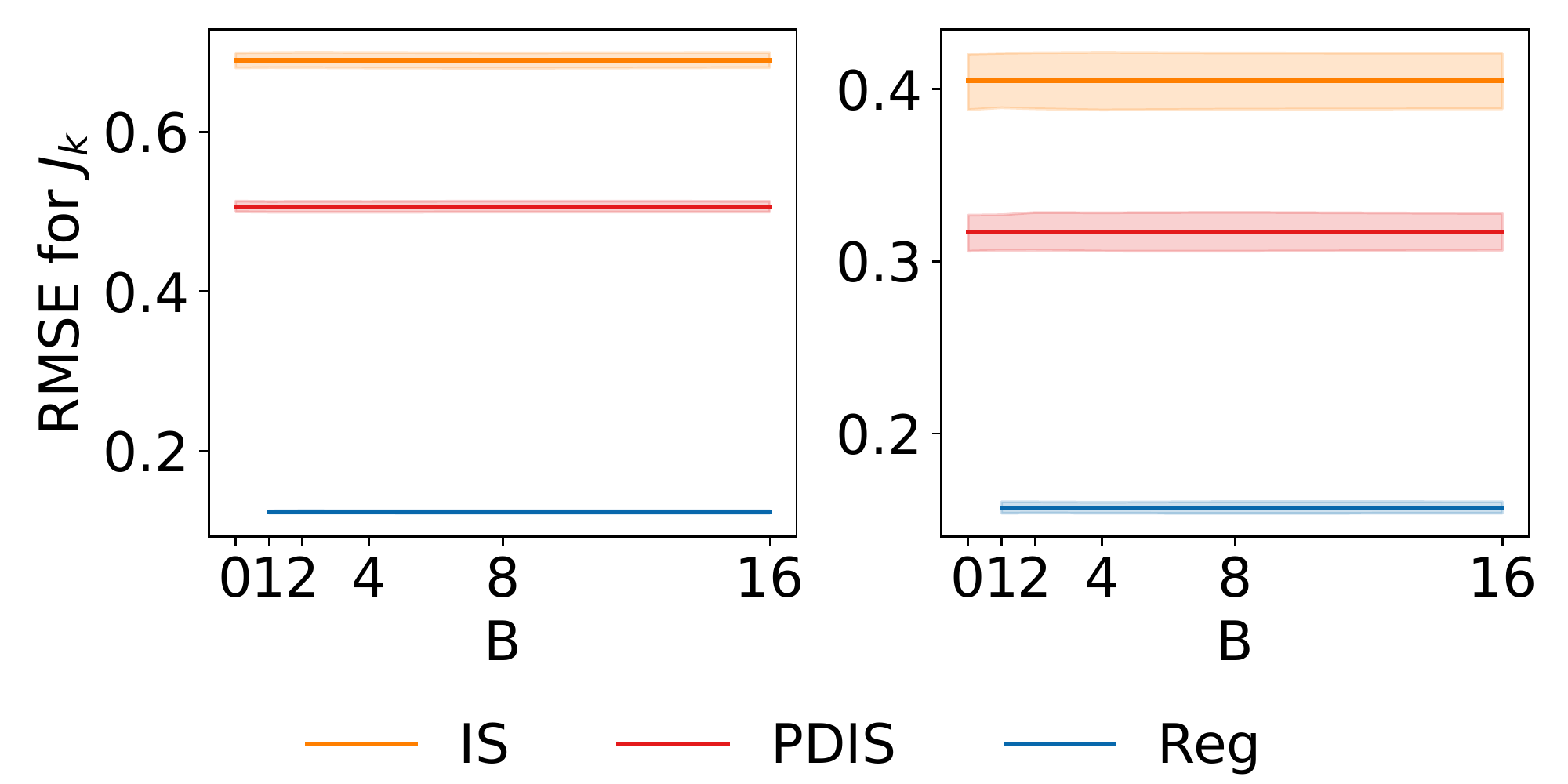}
    \caption{Comparison to PDIS in the simulated RL environment. \textbf{Left: estimating $J_k(\pi)$. Right: predicting $J_{k+1}(\pi)$.} Note that PDIS and IS are horizontal lines since they do not depend on $B$. }
    \label{fig:PDIS}
\end{figure}

\end{document}